\documentclass[10pt]{article} 
\usepackage[accepted]{tmlr}

\RequirePackage{algorithm}
\RequirePackage{algorithmic}

\usepackage{amsmath,amsfonts,bm}









\def\eqref#1{equation~\ref{#1}}









\def\1{\bm{1}}








\def\vgamma{{\bm{\gamma}}}
\def\vbeta{{\bm{\beta}}}
\def\vxi{{\bm{\xi}}}
\def\va{{\bm{a}}}

\def\vw{{\bm{w}}}

\def\vz{{\bm{z}}}

\def\vB{{\bm{B}}}



\DeclareMathAlphabet{\mathsfit}{\encodingdefault}{\sfdefault}{m}{sl}
\SetMathAlphabet{\mathsfit}{bold}{\encodingdefault}{\sfdefault}{bx}{n}











\newcommand{\E}{\mathbb{E}}

\newcommand{\R}{\mathbb{R}}



\usepackage{wrapfig}
\usepackage{multirow}
\usepackage{makecell}
\usepackage[normalem]{ulem}

\usepackage{booktabs}       
\usepackage{amsmath}
\usepackage{amssymb}
\usepackage{mathtools}
\usepackage{amsthm}\usepackage{nicefrac}       
\usepackage{xcolor}         

\usepackage{microtype}
\usepackage{graphicx}
\usepackage{subfigure}
\usepackage{booktabs} 

\usepackage{hyperref}
\usepackage{url}

\usepackage{soul} 

\newcommand{\modified}[1]{\textcolor{red}{#1}}
\newcommand{\tmlr}[1]{\textcolor{red}{#1}}
\newcommand{\arxiv}[1]{\textcolor{blue}{#1}}
\newcommand{\arxivv}[1]{\textcolor{blue}{#1}}
\newcommand{\nips}[1]{\textcolor{blue}{#1}}
\newcommand{\razpadd}[1]{\textcolor{rgreen}{#1}}


\renewcommand{\arxivv}[1]{\textcolor{black}{#1}}
\renewcommand{\modified}[1]{\textcolor{black}{#1}}
\renewcommand{\arxiv}[1]{\textcolor{black}{#1}}
\renewcommand{\razpadd}[1]{\textcolor{black}{#1}}

\newcommand{\fndag}{\textsuperscript{\dag}}
\renewcommand{\nips}[1]{\textcolor{black}{#1}}
\renewcommand{\tmlr}[1]{\textcolor{black}{#1}}

\theoremstyle{plain}
\newtheorem{theorem}{Theorem}[section]
\newtheorem{proposition}[theorem]{Proposition}
\newtheorem{lemma}[theorem]{Lemma}

\theoremstyle{definition}

\theoremstyle{remark}

\title{Maxwell's Demon at Work: Efficient Pruning by \\
Leveraging Saturation of Neurons}

\author{\name Simon Dufort-Labbé \email simon.dufort-labbe@mila.quebec \\
  \addr Mila, Université de Montréal
  \ANDD
  \name Pierluca D'Oro \\
  \addr Mila, Université de Montréal
  \ANDD
  \name Evgenii Nikishin \\
  \addr Mila, Université de Montréal
  \ANDD
  \name Irina Rish \\
  \addr Mila, Université de Montréal
  \ANDD
  \name Razvan Pascanu \\
  \addr Google DeepMind
  \ANDD
  \name Pierre-Luc Bacon \\
  \addr Mila, Université de Montréal
  \ANDD
  \name Aristide Baratin \email baratina@mila.quebec \\
  \addr Samsung -- SAIL Montreal \\
  Mila, Université de Montréal
}



\begin{document}

\maketitle

\begin{abstract}
When training neural networks, dying neurons---units becoming inactive or saturated---are traditionally seen as harmful. This paper sheds new light on this phenomenon. By exploring the impact of various hyperparameter configurations on dying neurons during training, we gather insights on how to improve upon sparse training approaches to pruning. We introduce {\bf Demon Pruning (DemP)},
a method that controls the proliferation of dead neurons through a combination of noise injection on active units and a one-cycle schedule regularization strategy, dynamically leading to network sparsity.  
Experiments on CIFAR-10 and ImageNet datasets demonstrate that DemP outperforms existing dense-to-sparse structured pruning methods, achieving better
accuracy-sparsity tradeoffs and accelerating training by up to 3.56$\times$. 
These findings provide a novel perspective on dying neurons as a resource for efficient model compression and optimization. The code for our experiments is available \href{https://github.com/SimonDufLab/Maxwell_demon}{here}. 
\end{abstract}

\section{Introduction}
\label{intro}

In neural network training, 
dying neurons---those that saturate or become inactive during the learning process---have long been considered   
undesirable 
\citep{Maas13rectifiernonlinearities, xu2015empirical}, leading to performance degradation or loss of plasticity,  
especially in non-stationary settings \citep{DBLP:conf/icml/LyleZNPPD23, nikishin2022primacy, DBLP:journals/corr/AbbasZMWM2023_PlasticityLossInDRL}. Past research has proposed solutions such as alternative 
activation functions 
like Leaky ReLU or Swish \citep{Maas13rectifiernonlinearities, DBLP:conf/iclr/RamachandranZL18}, 
or methods to reset weights  
\citep{d2022sample, arXiv2021_DohareRS}. However, the role of dead neurons remains poorly understood. 

In this work, we reframe the phenomenon of neuron death and saturation as a {\bf resource} for pruning, allowing us to reduce the size of neural networks dynamically during training.  
Inspired by Maxwell's Demon, which selectively filters molecules in thermodynamics ~\citep{maxwell1872theory}, we introduce 
{\bf Demon Pruning (DemP)}, \tmlr{a sparse training method that (i) promotes neuron death by applying asymmetric Gaussian noise to the weights of active neurons and using a one-cycle regularization schedule, and  (ii) iteratively removes dead neurons during training}. This approach enables highly structured sparsity with minimal performance loss. 

While DemP builds upon simple regularization-based methods \citep{LessIsMoreZhouAP16, PruneTrainLymCZWSE19, liu2017learning}, unlike prior approaches passively inducing sparsity, DemP actively promotes unit saturation during training, 
yielding progressively sparser subnetworks.
Compared to existing methods \citep{jaxpruner}, a key feature of DemP
is the simplification of the pruning process: by inspecting neuron activations with just a few forward passes, dead units are easily identified and pruned at minimum computational costs. 

Despite its simplicity, 
DemP consistently achieves superior accuracy-sparsity tradeoffs compared to strong baselines in structured dense-to-sparse training, such as  EarlyCrop \citep{DBLP:conf/icml/RachwanZCGAG22} or SNAP \citep{verdenius2020snipit}. For instance,  when pruning a ResNet-18 on CIFAR-10 or a ResNet-50 on ImageNet, DemP delivers up to 
a $2.5\%$ improvement in accuracy at $80\%$ sparsity compared to those baselines, while also accelerating training by up to 1.23x  on ImageNet. 
This positions DemP as an efficient method for model compression that balances both accuracy and computational cost.  
Finally, beyond its strong empirical performance,  DemP integrates seamlessly into any training pipeline and can be readily combined with existing pruning techniques. 

Our primary contributions are:

\begin{enumerate}

\item \textbf{Insights into Neuron Saturation:} We explore the mechanisms behind neuron saturation, highlighting the significant roles of stochasticity and key hyperparameters such as learning rate, batch size, and regularization (Section \ref{sec:analysis} and Appendix \ref{app:biased_rdm_walk}).

\item \textbf{
A Simple Structured Pruning Method:} We introduce DemP, a dynamic dense-to-sparse training method that promotes neuron death, and prunes them in real-time, by applying decaying regularization and injecting noise early in training 
(Section \ref{sec:pruning} and Algorithm \ref{alg:demon_pruning}). 

\item \textbf{\nips{Training Speedup and High Compression:}}  
Extensive experiments on CIFAR-10 and ImageNet show that DemP, despite its simplicity, surpasses 
strong comparable structured pruning baselines 
in accuracy-compression tradeoffs, while achieving results comparable to unstructured pruning methods and offering significant training speedup (Section \ref{sec:experiments} and Appendix \ref{app:additional_results}).

\end{enumerate}

\subsection{Related Work}\label{sec:related_work}

\textbf{Dead Neurons.} It is widely recognized that neurons, especially for networks using ReLU activations, can \razpadd{saturate} during training \citep{Agarap2018DLRelu, trottier2017parametric, lu2019dying}. Continual and reinforcement learning literature linked the accumulation of dead units with plasticity loss \citep{sokar2023dormant, DBLP:conf/icml/LyleZNPPD23, DBLP:journals/corr/AbbasZMWM2023_PlasticityLossInDRL, arXiv2021_DohareRS}. 
Closer to our work, \citet{Thesis2018_Evci} noted the connection between the dying rate and the learning rate, and derived a pruning technique from it.

\arxiv{\citet{relustrikebackMirzadehAMMTSRF}  
highlighted the strategic use of ReLU activations to achieve inference efficiency in transformer-based models 
and proposed an approach leveraging post-activation sparsity to improve inference speed in large language models.}

\textbf{Structured Pruning.} Pruning reduces the size and complexity of neural networks by removing redundant or less important elements, be they neurons or weights 
\citep{lecun1990optimal}. Recent advances such as the Lottery Ticket Hypothesis (LTH) \citep{DBLP:conf/iclr/FrankleC19} demonstrated the existence of subnetworks trainable to performance comparable to their dense counterpart. While unstructured pruning imposes no constraints on which weights to remove, structured methods aim to eliminate entire structures within a network, such as channels, filters, or layers. This results in smaller, faster models that are compatible with existing hardware accelerators and software libraries \citep{wen2016learning, li2016pruning}. \nips{Structured pruning methods often involve a dense training period followed by sparse fine-tuning \citep{han2015learning}. Regularization-based approaches for sparsification after dense training have been widely explored \citep{Louizos2017L0SparseNN, liu2017learning, ye2018rethinking, ding2018auto,Wang2019IncReg,Wang2021GrowReg}. The current SOTA for channel pruning at inference, such as Group Fisher \citep{GroupFisherLiuZKZXWCYLZ21}, CafeNet-R \citep{CafeNet-RSuYH00Z021}, CHIP \citep{ChipSuiYXPZY21} or AutoSlim \citep{AutoSlimYuH2019}, all involve either an extended training period or an increased computational budget, negating the potential benefits of pruning for training speedup.}

\textbf{Dynamic Pruning:}
\nips{To address these drawbacks,  
new methods aim to recover a sparse network under a fixed training budget. Regularization-based methods have been adapted to this setting by leveraging group regularization to influence the training dynamics to yield sparser networks \citep{LessIsMoreZhouAP16, PruneTrainLymCZWSE19}, or remarking that batch normalization offers a natural grouping of units under the scale parameter that can be solely regularized \citep{liu2017learning}.} 
\nips{The LTH inspired multiple methods seeking the winning ticket without iterative pruning. \textit{Sparse-to-sparse} approaches \citep{SNIPLeeAT19, wang2020grasp}---pruning weights before training---were adapted to work under structured constraints \citep{verdenius2020snipit}. Because deep networks suffer increasingly from pruning at initialization \citep{DBLP:conf/icml/FrankleD0C20}, \textit{dense-to-sparse} methods perform pruning at a later stage \citep{EarlyBirdYouL0FWCBWL20, DBLP:conf/icml/RachwanZCGAG22}.}

\nips{A recent trend, Dynamic Sparse Training (DST) \citep{mocanu2018scalable, evci2020rigging}, initializes networks sparsely and uses an iteratively updated pruning mask to balance performance and computational efficiency. 
Initially focusing on unstructured sparsity, DST has recently been adapted for partially-structured sparse networks \citep{Lasby2023srigl, ChaseLFSHWMMPL23}, yet still not exceeding 50\% node sparsity.}

\textbf{Contextualizing DemP:}
\nips{Within the landscape of pruning techniques, DemP represents an advancement as a dense-to-sparse end-to-end structured method,  offering tangible speedup during both training and inference on GPU hardware. It revisits earlier group regularization-based approaches---monitoring activations similar to \citet{HuPTT2016network_trimming} and leveraging increasing regularization as in \citep{Wang2021GrowReg} on scale parameters \citep{liu2017learning}---by integrating fresh insights into its design to achieve competitiveness with more recent methods while preserving simplicity.} 
Notably, addressing the phenomenon wherein noisier environments can make saturated regions more attractive (see Lemmas ~\ref{app:lemma1}, ~\ref{app:lemma2} and Appendix \ref{app:noise_lr_bs}), DemP introduces   
asymmetric noise during optimization to further promote sparsity. Sparsity naturally emerges from the learning dynamics with DemP, simplifying the pruning process and eliminating the need for discrete pruning interventions which could disrupt training. Non-contributing neurons can be pruned as they become inactive, without compromising model performance (as demonstrated in Fig.~\ref{fig:dyn_pruning_difference}).
Recent unstructured pruning approaches manually tailor sparsity distribution across layers \citep{mocanu2018scalable, evci2020rigging}, while structured pruning mostly relies on uniform layer-wise pruning \citep{DBLP:conf/icml/RachwanZCGAG22}. DemP alleviates this by allowing the learning dynamics to determine sparsity distribution across layers.

DemP eliminates the need for meticulous pruning timing \citep{Wang2021GrowReg, DBLP:conf/icml/RachwanZCGAG22} or heuristic-based pruning schedules \citep{jaxpruner}. Acknowledging that early pruning can be detrimental \citep{DBLP:conf/icml/FrankleD0C20, DBLP:conf/icml/RachwanZCGAG22}, DemP employs easy-to-tune regularization schedules to influence when units die. Additionally, iterative pruning \citep{verdenius2020snipit} within a single intervention is unnecessary, as units naturally become inactive during early training with DemP.

{\bf Choice of Baselines:} We highlight and benchmark against strong \nips{comparable} baselines that use criteria based on gradient flow to evaluate which nodes to prune \citep{verdenius2020snipit, wang2020grasp, DBLP:conf/icml/RachwanZCGAG22}. Other methods employing either L0 or L1 regularization on subgroups of parameters 
to enforce sparsity \citep{liu2017learning, Louizos2017L0SparseNN, You20019GateDecorator, PruneTrainLymCZWSE19} are omitted from the benchmark as they are outperformed by \citet{DBLP:conf/icml/RachwanZCGAG22}.

\section{\razpadd{Saturating Units}: An Analysis}
\label{sec:analysis}

In this section, we offer theoretical insights into the occurrence of dead neurons and explore the impact of different training heuristics and hyperparameters. 
Given a neural network and a set of $n$ training data samples,  we denote by $a^\ell_j \in \R^n$ the vector of activations of the $j$th neuron in layer $\ell$ for each training input. We adopt the following definition of a \emph{dead neuron} throughout the paper:

\begin{figure}
\begin{center}
  \begin{center}
    \includegraphics[width=0.55\textwidth]{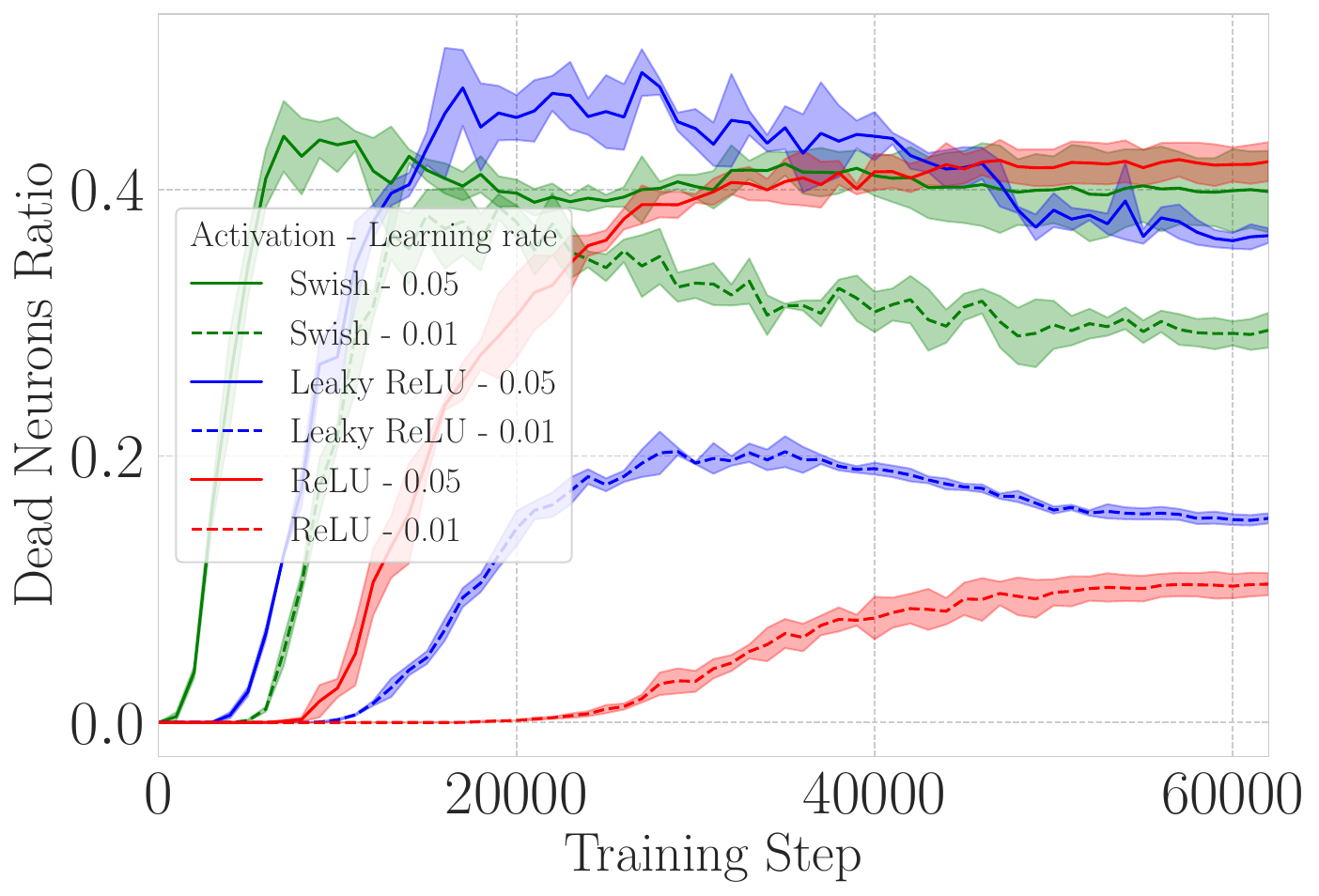}  
  \end{center}
  \vspace{-5pt}
  \caption{Dead neuron 
  accumulation for a ResNet-18 trained on CIFAR-10 with different activation functions and values of the learning rate. We use a negative slope of $\alpha=0.05$ for Leaky ReLU and $\beta=1$ for Swish.  
  }
  \label{fig:typical_accumulation}
  \end{center}
\vskip -0.1in
\end{figure}

\textbf{Definition:} The $j$-th neuron in layer $\ell$ is \textit{inactive} if it consistently outputs zero on the entire training set, i.e. $a_j^{\ell} =0$.\footnote{\label{def:eps_inact} In practice, especially for non-ReLU activation functions, we will be using the notion of $\epsilon$-inactivity,  defined by the condition $|a_i^\ell| < \epsilon$, for some threshold parameter $\epsilon$.} A neuron that becomes and remains inactive during training is considered as \textit{dead}.\footnote{\modified{For convolutional layers we treat channels as neurons (see Appendix \ref{app:dead_def_conv}).}}
 
Many modern architectures use activation functions with a saturation region that \razpadd{typically} includes $0$ at its boundary. In this case, when a neuron becomes inactive during training, its incoming weights also receive zero---or very small\footnote{e.g., due to regularization.}---gradients, 
which makes it difficult for the neuron to recover. In this paper, we mostly work with the Rectified Linear Unit (ReLU) activation function, $\sigma(x) = \max(0, x)$.  In this case, the activity of a neuron depends on the sign of the corresponding pre-activation feature.

\subsection{Neurons Die During Training} \label{subsec:intuition_and_role_of_noise}
\label{sec:observations}

We begin with some empirical observations. Applying the above criterion (Footnote \ref{def:eps_inact}) with threshold parameter $\epsilon = 0.01$, we monitor the accumulation of dead neurons during training of a Resnet-18 \citep{DBLP:conf/cvpr/HeZRS16} on CIFAR-10 \citep{krizhevsky2009learning} with the Adam optimizer \citep{DBLP:journals/corr/KingmaB14}, with various learning rates and choices of activation functions.

The outcomes are shown in Fig.~\ref{fig:typical_accumulation}, revealing a notable and sudden increase in the number of inactive neurons early in training. Moreover, a minimal portion of these inactive neurons shows signs of recovery in later stages of training \modified{(see Fig.~\ref{fig:dead_neurons_overlap} in Appendix \ref{app:overlap_no_revive})}. Overall, \modified{this results in a significant fraction of the 3904 neurons/filters in the convolutional layers of the ResNet-18 dying during training}, particularly with a high learning rate.  We note that this phenomenon is not exclusive to ReLU activations. 

\textbf{Intuition.} Similar to Maxwell's demon thought experiment \citep{maxwell1872theory}, one can picture 
\razpadd{an asymmetric behavior of how weights can move across the boundary that delimitates the saturated versus the non-saturated state of a unit, \arxiv{akin to the demon’s selective passage of molecules.} Neurons, or more specifically their weights, can move freely within the active region, but 
once they enter the inactive region their movement is impeded, \arxiv{similar to how the demon can trap molecules in a low-energy subspace.}}
\modified{In this context, with a random component to their movement, 
neuron death can be influenced by various factors, e.g.,  noise from the data, being too close to the border, or taking too large gradient steps.    
Once the neuron moves into the inactive zone, neurons can only be reactivated if the boundary itself shifts. This asymmetry makes it more likely for neurons to die than to revive.} 

We formalize this analogy with simple theoretical models in Appendix~\ref{app:biased_rdm_walk}. 
These models are meant to capture the multiplicative (i.e. parameter-dependent) nature of the gradient noise in SGD. Multiplicative noise in stochastic processes is known to cause regions with lower noise magnitude to act as attractors \citep{OksendalSDE_book}. Essentially,  lower noise regions (such as the inactive region of a saturating unit)  tend to retain the system due to reduced impact, while higher noise regions push it away.  We explicitly demonstrate this phenomenon in the theoretical framework of Appendix~\ref{app:biased_rdm_walk}.  

\subsection{Factors Impacting Dying Ratios}

\textbf{Role of regularization.}\label{sec:analysis_role_reg} The above intuition suggests that maintaining activations close to zero increases the likelihood of neuron death, aligning them closer to the inactive region (the negative domain for ReLU networks). 
As illustrated in Fig.~\ref{fig:reg_impact_on_accumulation},  training a ResNet-18 on CIFAR-10 (with Adam or SGD+momentum) with a higher L2 or Lasso regularization leads to sparser networks caused by an accumulation of dead units. Section \ref{sec:pruning} elaborates on the approach with DemP, where we opt to apply regularization exclusively on the scale parameters of the normalization layers, following  \citet{liu2017learning}. This tends to better preserve performance and yields significantly improved accuracy-sparsity tradeoffs, as shown in Fig.~\ref{fig:regularizer_choice} in Appendix \ref{app:pruning_ablation_details}.

\begin{figure*}[t!]
\begin{center}
  \begin{center}
    \subfigure{
    \includegraphics[width=0.49\textwidth]{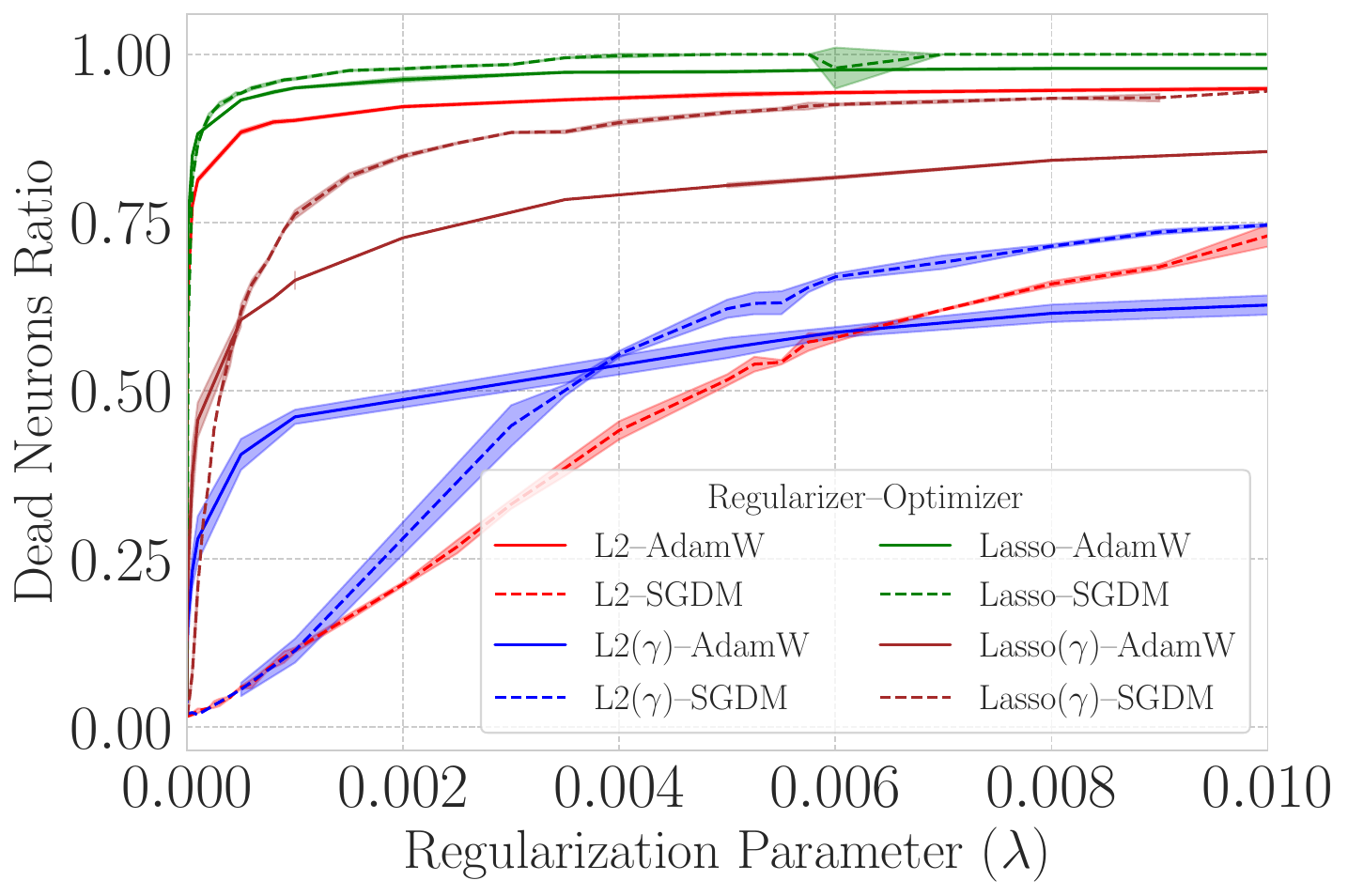}}
    \hfil \hfil
    \subfigure{\includegraphics[width=0.49\textwidth]{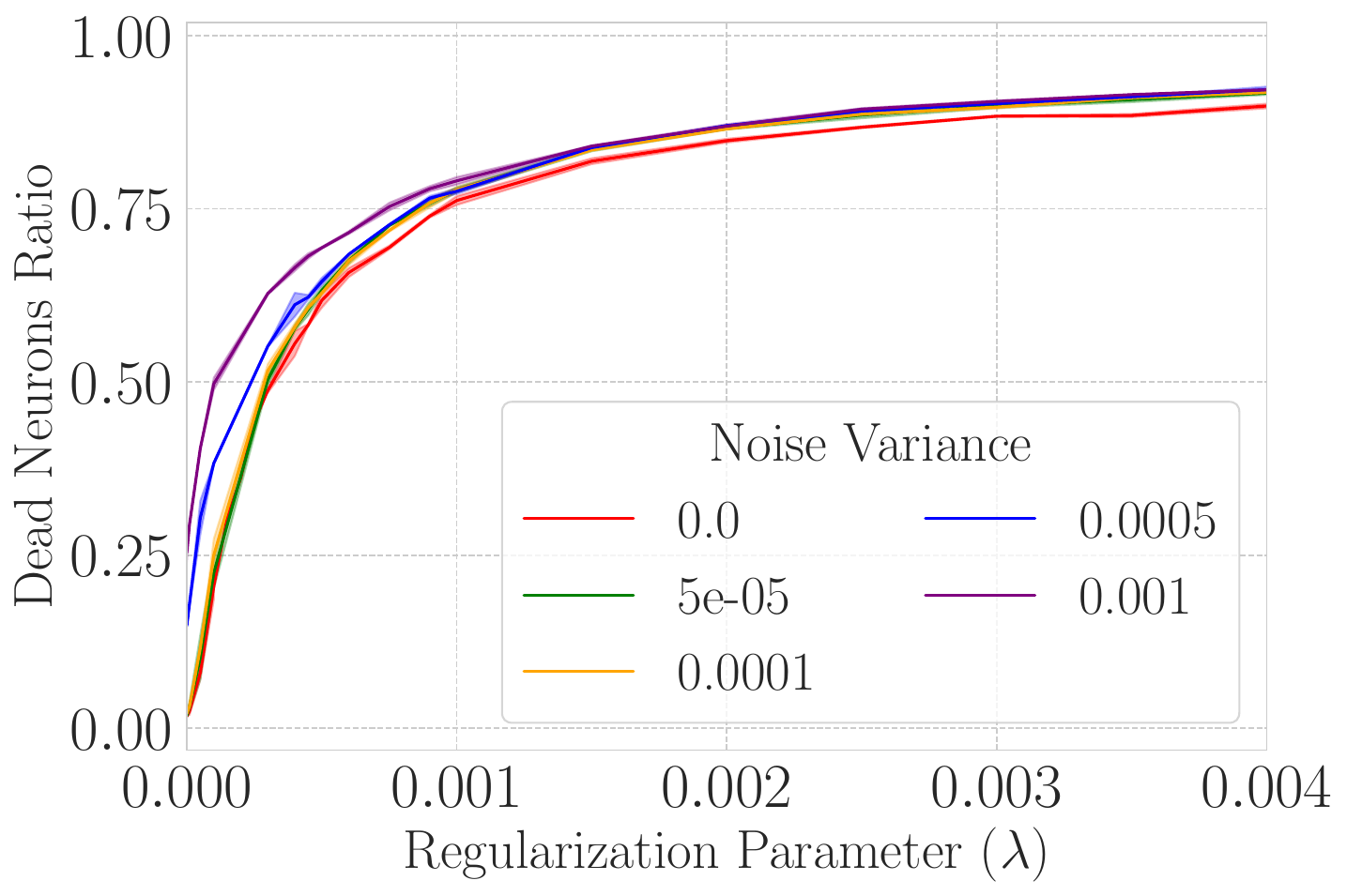}}
  \end{center}
  \vspace{-5pt}
  \caption{\textbf{Left: }Increased regularization during training increases the ratio of dead units, as showcased here for a ResNet-18 trained on CIFAR-10. We use $\cdot(\gamma)$ to denote when regularization is applied solely to the scale parameters of the normalization layers. \textbf{Right:} Augmenting training updates with asymmetric Gaussian noise, sampled from $\mathcal{N}(x | 0, \sigma^2)$ and applied to weights of live neurons only, also leads to higher levels of dead unit accumulation, also showcased for a ResNet-18 trained on CIFAR-10 with various values of the Lasso($\gamma$) regularization parameter.
  }
  \label{fig:reg_impact_on_accumulation}
  \label{fig:noise_variance_impact_on_accumulation}
  \end{center}
\vskip -0.1in
\end{figure*}

\textbf{Role of noise.}\label{sec:analysis_role_noise} Our simple model suggests a pivotal role played by the noise of the training process in the occurrence of dying neurons. To investigate this in a simple setting, 
we train small MLPs on a subset of 10,000 images of the MNIST dataset in three different noisy regimes:  vanilla SGD, pure SGD noise (obtained by isolating the noisy part of the minibatch gradient), and pure Gaussian noise. As observed in Fig.~\ref{fig:noise_only_for_updates} in Appendix \ref{app:noise_lr_bs},  while pure SGD noise training yields dying ratios comparable to SGD, training with pure Gaussian noise is much less prone to dead neuron accumulation. We hypothesize that this difference is due to the asymmetry of SGD noise: since the gradients are 0 for dead neurons, only (the weights of) live neurons are subject to noisy updates. In contrast, the weights of inactive neurons are updated under Gaussian noise training, which increases their probability of recovery. As we spell out in Section \ref{sec:pruning}, DemP injects Gaussian noise exclusively to weights of live neurons during training, which is effective in encouraging neuron death (see Fig.~\ref{fig:noise_variance_impact_on_accumulation} and Fig.~\ref{fig:noise_only_for_updates}, middle plot). 

We note that other ways to control the SGD noise variance include varying the learning rate and/or the batch size.  Fig.~\ref{fig:histogram_main} (right) in Appendix~\ref{app:noise_lr_bs} illustrates how increasing the learning rate or reducing the batch size indeed increases dying ratios. However, it proved more challenging to balance accuracy and sparsity by adjusting these hyperparameters, as they also significantly affect overall performance. Instead, DemP maintains these parameters at their default or task-optimized values, facilitating its integration into existing training routines.

\textbf{Optimizer.} We expect the choice of optimizer to affect the final count of dead neurons post-training (e.g., for adaptive optimizers,  by altering the effective learning rate per parameter). Notably,  we observe a significant discrepancy between using the Adam optimizer \citep{DBLP:journals/corr/KingmaB14} and SGD with momentum (SGDM) (Fig.~\ref{fig:reg_impact_on_accumulation}). As also emphasized by \citet{DBLP:conf/icml/LyleZNPPD23}, we attribute this difference primarily to the specific hyperparameter selection for Adam  ($\beta_1, \beta_2, \epsilon$), which has a substantial impact on neuron death (further discussed in Appendix \ref{app:adam_neuron_killer}). Results from experiments with both optimizers are presented in the next section.

\arxivv{Finally, the total training time and network width can also impact the occurrence of dying neurons and are explored respectively in Appendix~\ref{app:training_time_impact} and Appendix~\ref{app:network_width_impact}.}

\section{Demon Pruning}\label{sec:pruning}

Leveraging insights from Section \ref{sec:analysis}, we introduce Demon Pruning (DemP),
\tmlr{a structured pruning method that dynamically induces neuron saturation during training to achieve high levels of sparsity. DemP achieves this through two primary mechanisms: (1) a dynamic regularization schedule applied to normalization scale parameters and (2) asymmetric noise injection targeting active neurons. The combination of these mechanisms ensures sparse networks without compromising performance.} DemP is summarized in Algorithm~\ref{alg:demon_pruning}. 


\begin{algorithm}[tb]
   \caption{DemP Algorithm}
   \label{alg:demon_pruning}
\begin{algorithmic}
   \STATE {\bfseries Input:} Learning rate $\eta$, pruning frequency $\tau$,  regularization one-cycled schedule $\lambda_t$, noise variance one-cycled schedule $\sigma^2_t$, 
   learnable weights $w$\tmlr{---including normalization scale parameters $\gamma$}  
   \STATE {\bfseries Initialize:} weights $w_0$\tmlr{---including normalization scale parameters $\gamma_0$} 
   \FOR{$t=0$ {\bfseries to} $T$}
   \STATE Compute gradient: $\nabla L(w_t)$ 
   \STATE Regularization term: $\nabla R(\gamma_t; \lambda_t)$ \tmlr{affecting only the normalization scale parameters}     
   \STATE Sample asymmetric noise: $\xi_{\mathrm{asym}} \sim \mathcal{N}(0, \sigma_t^2)$ 
   \STATE Update weights:\\  
   $w_{t+1} \leftarrow w_t - \eta (\nabla L(w_t) + \nabla R(\gamma_t; \lambda_t)) + \xi_{\mathrm{asym}}$
   \IF{$t \% \tau = 0$}
   \STATE Prune dead neurons
   \ENDIF
   \ENDFOR
\end{algorithmic}
\end{algorithm}

\subsection{Regularization Schedule}


\arxiv{Analyzing  the pre-activations ($\hat{\mathbf{z}}^{[l]}$) after batch normalization of layer $l$'s  output:($\mathbf{F}_l(\mathbf{W}^{[l]}, \va^{[l-1]})$)} 
\begin{equation}\label{eq:pre-activation}
    \hat{\vz}^{[l]} = \vgamma^{[l]} \odot \left( \frac{\mathbf{F}_l(\mathbf{W}^{[l]}, \va^{[l-1]}) - \mu^{[l]}}{\sqrt{\sigma^{2[l]} + \epsilon}} \right) + \vbeta^{[l]}
\end{equation}

\arxiv{suggests two strategies to push $\hat{\mathbf{z}}^{[l]}$ toward zero: (1) regularizing all networks parameters, (2) regularizing the scale normalization parameters ($\vgamma$). We refrain from regularizing the offset parameters ($\vbeta$), to allow for large negative offsets that can help to get negative pre-activations, leading to dead neurons post ReLU activation.}
By default, our method employs  $\ell_1$-regularization $R(\vgamma; \lambda) = \lambda\|\vgamma\|_1$ on the scale parameters of normalization layers, as in \citet{liu2017learning}, \tmlr{with a small modification. Based on the insights of Section \ref{sec:analysis}, we adopt a one-cycled schedule
\citep{smith2018superconvergence} $\lambda_t$ 
for the regularization strength. This schedule involves an initial warmup phase with a linear growth to a peak value $\lambda$, followed by a cosine decay to fine-tune sparsity levels while maintaining performance.}

\begin{figure*}[tb]
    \vskip -0.1in
    \centering
    \includegraphics[width=\textwidth]{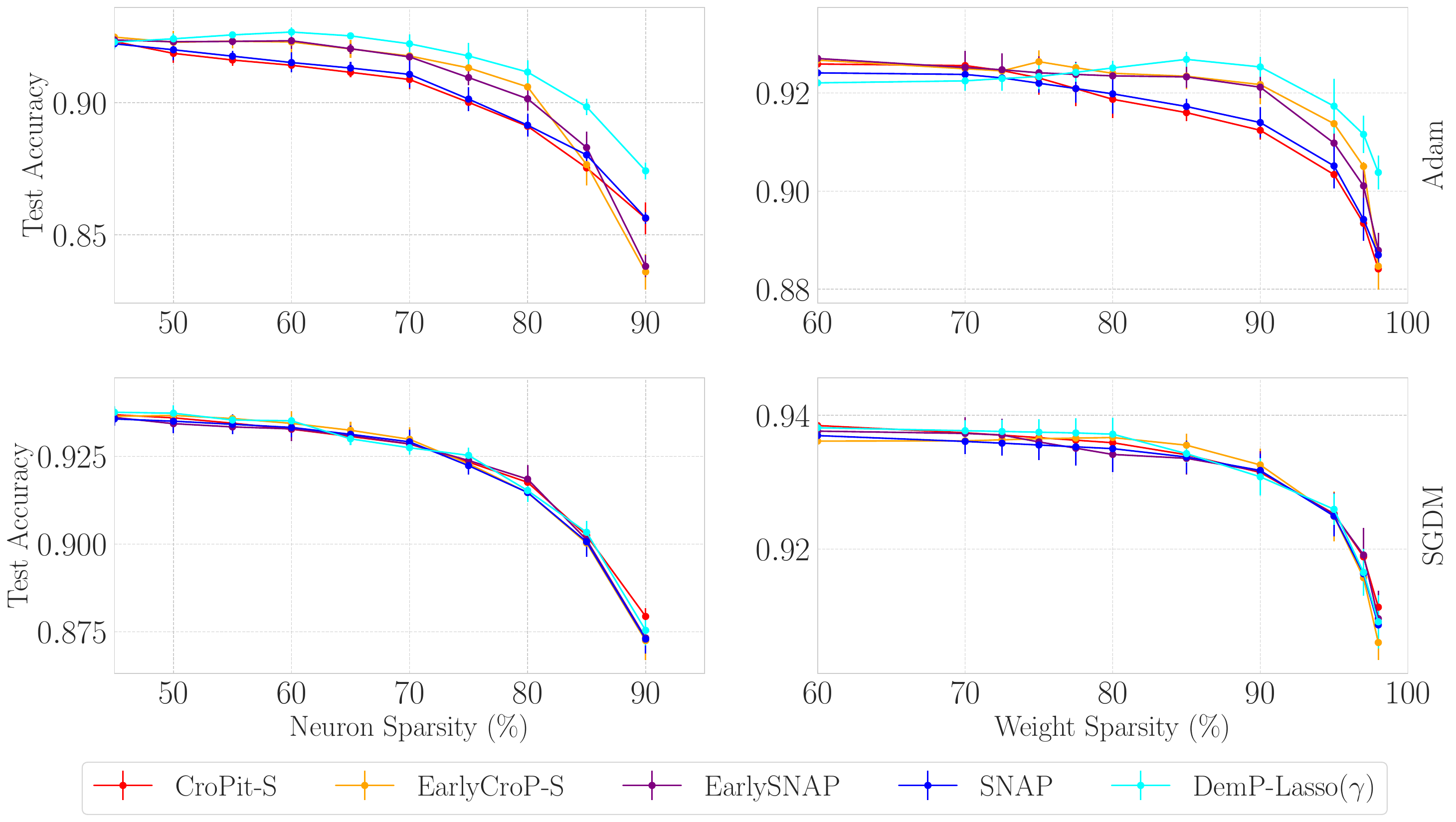}
    \vskip -0.1in
    \caption{\textbf{Top:} For ResNet-18 networks on CIFAR-10 trained with Adam (ReLU), DemP can find sparser solutions while maintaining \modified{better} performance than other structured approaches. \arxivv{Higher levels of sparsity with DemP are obtained by increasing the peak strength of the added scheduled regularization.} \arxiv{\textbf{Bottom:} With SGDM, DemP performance is comparable, without significant differences between methods.} \textbf{Left:} Neural sparsity, structured methods. \textbf{Right:} Weight sparsity, structured methods.}
    \label{fig:res18_cifar10}
    \vskip -0.1in
\end{figure*}

Such a one-cycled strategy---originally applied to learning rates---effectively exploits the benefits of both warmup and decay phases (Fig.~\ref{fig:reg_param_schedule_ablation}). While regularizing all network parameters is a viable alternative for inducing sparsity, applying regularization exclusively to normalization scale parameters proves more effective and robust, as shown in Fig.~\ref{fig:reg_impact_on_accumulation} and in Section~\ref{subsection:pruning_method_ablation}. 

\subsection{Noise Injection}
To promote neuron death, DemP injects Gaussian noise selectively into active weights during training. At each iteration, the noise is sampled as $\xi_{\mathrm{asym}} \sim \mathcal{N}(0, \sigma_t^2)$ where the variance $\sigma_t^2$ follows the same one-cycled schedule as the regularization strength. The targeted noise injection  
enhances sparsity without degrading performance (see Fig.~\ref{fig:noise_impact}). \arxivv{The effectiveness of artificial noise is further validated in Fig.~\ref{fig:pruning_with_noise_res18} where it demonstrates the ability to sparsify networks even in the absence of regularization.}

\subsection{Dynamic Pruning}

\tmlr{DemP leverages targeted regularization and asymmetric noise to progressively accumulate inactive neurons, as described in Section \ref{sec:analysis}. Inactive neurons are identified as those with outputs consistently equal to 0 across the dataset—a criterion relaxed in practice to minibatches (Fig.~\ref{fig:dead_criterion_comparison}). For Leaky ReLU and GeLU networks, neurons with consistently negative activations are similarly deemed inactive.}

\tmlr{Instead of deferring pruning until training concludes, DemP dynamically prunes dead neurons every 
$t$ iterations (default: 5000). This approach accelerates training by removing pruned neurons from the computation graph, reducing the embedding dimension. Importantly, it allows non-ReLU networks to recover without affecting final performance or sparsity levels (Fig.~\ref{fig:dyn_pruning_difference}).} 

DemP thus introduces two new hyperparameters: scheduled regularization on normalization scale parameters and artificial noise. These additions are orthogonal to standard hyperparameters like weight decay, ensuring compatibility with existing training pipelines. By retaining this orthogonality, DemP integrates seamlessly into diverse setups, while leaving room for exploration of additional hyperparameters to achieve greater sparsity.
Additional implementation details and ablations are provided in Appendix \ref{app:pruning_ablation_details}.

\section{Empirical Evaluation}\label{sec:experiments}
\arxivv{Through extensive empirical evaluation across various benchmarks, we consistently observe  
that DemP achieves superior 
performance-sparsity tradeoffs, particularly at high sparsity levels and when combined with Adam.}  
\nips{Comparing with SOTA DST methods, such as \citet{ChaseLFSHWMMPL23} and \citet{Lasby2023srigl}, that perform restructuration exposes the benefits of DemP for achieving GPU-supported training FLOPs reduction (Table ~\ref{tab:res50_sgdm_results}).}

\textbf{Setup:} We focus our experiments on computer vision tasks, which is standard in pruning literature \citep{Gale2019_state_of_sparsity}. We train ResNet-18 and VGG-16 networks on CIFAR-10, and ResNet-50 networks on ImageNet \citep{DBLP:conf/cvpr/HeZRS16, DBLP:journals/corr/SimonyanZ14a, krizhevsky2009learning, DBLP:conf/cvpr/DengDSLL009}. We follow the training regimes from \citet{evci2020rigging} for ResNet architectures and use a setting similar to \citet{DBLP:conf/icml/RachwanZCGAG22} for the VGG to broaden the scope of our experiments. The results for pruning the VGG are reported in Appendix~\ref{app:additional_results} and training details are provided in Appendix~\ref{app:training_details}. 

\begin{table}[b!]
    \vskip -0.15in
    \centering
    \small
    \renewcommand{\arraystretch}{1.5}  
    \caption{Results when pruning a ResNet-50 trained on ImageNet with \textbf{Adam} at approximately 80\% \modified{and 90\%} weight sparsity. \textbf{DemP improves test accuracy by 2.85\% and 2.14\%} at the respective sparsities. 
    Because structured pruning methods lack precise control over weight sparsity levels., we report the closest numbers obtained to these target values. 
    \nips{SNAP and CroPit-S achieve better speedup but do so by reducing the test accuracy below 30\%. DemP achieves better speedups compared with methods maintaining accuracy above 60\%. Standard deviation ($\pm$) was computed over three seeds.}}
    \label{tab:res50_results}
    \label{tab:res50_speedup_flops}
    \vspace{0.25cm}
    \begin{tabular}{cccccccc}
        \hline
         & \textbf{Method} & \textbf{\makecell{Test \\ accuracy}} & \textbf{\makecell{Neuron \\ sparsity}} & \textbf{\makecell{Weight \\ sparsity}} & \textbf{\makecell{Training \\ time}} & \textbf{\makecell{Training \\ FLOPs}} & \textbf{\makecell{Inference \\ FLOPs}}\\
        \hline
        & Dense & 74.98\% {\tiny $\pm 0.08$} & - & - & 1.0$\times$ & 1.0$\times$ {\tiny(3.15e18)} & 1.0$\times$ {\tiny(8.2e9)}\\
        \hline
        \multirow{10}{*}{\rotatebox[origin=c]{90}{Structured}}
        & \multirow{2}{*}{SNAP} & 28.28\% {\tiny $\pm 0.08$} & 36.9\% & 81.4\% & 0.51$\times$ & 0.32$\times$ & 0.32$\times$\\
        \cline{3-8}
        && 27.17\% {\tiny $\pm 0.07$} & 56.0\% & 90.1\% & 0.48$\times$ & 0.25$\times$ & 0.25$\times$\\
        \cline{2-8}
         & \multirow{2}{*}{CroPit-S} & 28.34\% {\tiny $\pm 0.52$} & 36.9\% & 81.4\% & 0.52$\times$ & 0.32$\times$ & 0.32$\times$\\
         \cline{3-8}
        && 27.36\% {\tiny $\pm 0.16$} & 53.2\% & 89.9\% & 0.47$\times$ & 0.27$\times$ & 0.27$\times$\\
        \cline{2-8}
         & \multirow{2}{*}{EarlySNAP} & 68.67\% {\tiny $\pm 0.15$}& 51.70\% & 80.37\% & 0.95$\times$ & 0.63$\times$ & 0.63$\times$\\
         \cline{3-8}
        && 63.80\% {\tiny $\pm 0.58$} & 66.6\% & 90.06\% & 0.75$\times$ & 0.46$\times$ & 0.45$\times$\\
        \cline{2-8}
         & \multirow{2}{*}{EarlyCroP-S} & 68.26\% {\tiny $\pm 0.31$}& 51.60\% & 79.97\% & 0.94$\times$ & 0.66$\times$ & 0.66$\times$\\
         \cline{3-8}
        && 64.20\% {\tiny $\pm 0.27$} & 66.6\% & 90.37\% & 0.82$\times$ & 0.51$\times$ & 0.50$\times$\\
        \cline{2-8}
         & \multirow{2}{*}{DemP-L2} & \textbf{71.52\%} {\tiny $\pm 0.09$} & 61.83\% & 80.13\% & 0.81$\times$ & 0.57$\times$ & 0.49$\times$\\
         \cline{3-8}
        && \textbf{66.34\%} {\tiny $\pm 0.16$} & 74.1\% & 89.93\% & 0.61$\times$ & 0.42$\times$ & 0.34$\times$\\
        \hline
    \end{tabular}
    \vskip -0.1in
\end{table}

\begin{table*}[tb!]
    \vskip -0.15in
    \centering
    \small
    \renewcommand{\arraystretch}{1.5}  
    \caption{Results for pruning a ResNet-50 trained on ImageNet with \textbf{SGDM} at approximately 80\% \modified{and 90\%} weight sparsity, \tmlr{formatted similarly to Table~\ref{tab:res50_results}. DemP outperforms comparable structured pruning methods and is also compared with selected unstructured and restructured approaches, including recent DST methods like RigL, SRigL and Chase. While these methods achieve better sparsity-performance tradeoffs in terms of theoretical FLOP reduction, their sparsification is not fully GPU-amenable, leading to minimal training and inference speedup, as shown in the last column.}}
    \label{tab:res50_sgdm_results}
    \vspace{0.25cm}
    \begin{tabular}{ccccccccc}
        \hline
         & \textbf{Method} & \textbf{\makecell{Test \\ accuracy}} & \textbf{\makecell{Neuron \\ sparsity}} & \textbf{\makecell{Weight \\ sparsity}} & \textbf{\makecell{Training \\ time}} & \textbf{\makecell{Training \\ FLOPs}} & \textbf{\makecell{Inference \\ FLOPs}} & \textbf{\makecell{GPU Inf. \\ FLOPs}}\\
        \hline
        & Dense & 75.62\% {\tiny $\pm 0.28$} & - & - & 1.0$\times$ & 1.0$\times$ {\tiny(3.15e18)} & 1.0$\times$ {\tiny(8.2e9)} & 1.0$\times$ {\tiny(8.2e9)}\\
        \hline
        \multirow{10}{*}{\rotatebox[origin=c]{90}{Structured}}
        & \multirow{2}{*}{SNAP} & 26.05\% {\tiny $\pm 0.17$} & 36.90\% & 81.40\% & 0.54$\times$ & 0.33$\times$ & 0.33$\times$ & 0.33$\times$\\
        \cline{3-9}
        && 24.84\% {\tiny $\pm 0.11$} & 56.07\% & 90.10\% & 0.47$\times$ & 0.25$\times$ & 0.25$\times$ & 0.25$\times$\\
        \cline{2-9}
         & \multirow{2}{*}{CroPit-S} & 25.93\% {\tiny $\pm 0.06$} & 36.90\% & 81.40\% & 0.50$\times$ & 0.31$\times$ & 0.31$\times$ & 0.31$\times$\\
         \cline{3-9}
        && 25.44\% {\tiny $\pm 0.05$} & 53.20\% & 89.80\% & 0.49$\times$ & 0.26$\times$ & 0.26$\times$ & 0.26$\times$\\
        \cline{2-9}
         & \multirow{2}{*}{EarlySNAP} & 70.06\% {\tiny $\pm 0.16$}& 38.73\% & 79.97\% & 0.87$\times$ & 0.57$\times$ & 0.56$\times$ & 0.56$\times$\\
         \cline{3-9}
        && 64.25\% {\tiny $\pm 0.10$} & 57.23\% & 89.73\% & 0.78$\times$ & 0.44$\times$ & 0.44$\times$ & 0.44$\times$\\
        \cline{2-9}
         & \multirow{2}{*}{EarlyCroP-S} & 71.45\% {\tiny $\pm 0.29$}& 41.13\% & 79.77\% & 0.86$\times$ & 0.61$\times$ & 0.61$\times$ & 0.61$\times$\\
         \cline{3-9}
        && 67.04\% {\tiny $\pm 0.18$} & 58.77\% & 90.07\% & 0.80$\times$ & 0.48$\times$ & 0.48$\times$ & 0.48$\times$\\
        \cline{2-9}
         & \multirow{2}{*}{DemP-Lasso($\gamma$)} & \textbf{72.21\%} {\tiny $\pm 0.19$} & \textbf{53.50\%} & 79.43\% & 0.88$\times$ & 0.62$\times$ & 0.58$\times$ & 0.58$\times$\\
         \cline{3-9}
        && \textbf{67.76\%} {\tiny $\pm 0.12$} & \textbf{68.51\%} & 89.73\% & 0.69$\times$ & 0.46$\times$ & 0.41$\times$ & 0.41$\times$\\
        \hline
        \hline
        & Dense\fndag & 76.67\% & - & - & - & - & -\\
        & Dense\textasteriskcentered\P\S & 76.8 {\tiny $\pm 0.09$} \% & - & - & - & 1.0$\times$ & 1.0$\times$& 1.0$\times$\\
        \hline
        \multirow{6}{*}{\rotatebox[origin=c]{90}{Unstructured}}
         & Mag\fndag & \textbf{75.53\%} & - & 80\% & - & - & - & 1.0$\times$\\
        \cline{2-9}
         & Sal\fndag & 74.93\% & - & 80\% & - & - & - & 1.0$\times$\\
        \cline{2-9}
         & \multirow{2}{*}{SET\textasteriskcentered} & 72.9\%{\tiny $\pm 0.39$} & - & 80\% & - & 0.23$\times$ & 0.23$\times$ & 1.0$\times$\\
         \cline{3-9}
         && 69.6\%{\tiny $\pm 0.23$} & - & 90\% & - & 0.10$\times$ & 0.10$\times$ & 1.0$\times$\\
        \cline{2-9}
         & \multirow{2}{*}{RigL (ERK)\textasteriskcentered} & 75.10\%{\tiny $\pm 0.05$} & - & 80\% & - & 0.42$\times$ & 0.42$\times$ & 1.0$\times$\\
         \cline{3-9}
         && 73.00\%{\tiny $\pm 0.04$} & - & 90\% & - & 0.25$\times$ & 0.24$\times$ & 1.0$\times$\\
        \hline
        \multirow{4}{*}{\rotatebox[origin=c]{90}{Restructured}}
        & \multirow{2}{*}{SRigL\P} & 75.01\% & $\approx$2\% & 80\% & - & 0.36$\times$ & 0.41$\times$ & $\approx$1.0$\times$\\
         \cline{3-9}
         && 72.71\% & $\approx$5\% & 90\% & - & 0.24$\times$ & 0.24$\times$ & $\approx$1.0$\times$\\
         \cline{2-9}
         & \multirow{2}{*}{Chase\S} & 75.27\%& 40\% & 80\% & - & 0.39$\times$ & 0.37$\times$ & 0.68$\times$\\
         \cline{3-9}
         && \textbf{74.03\%} & 40\% & 90\% & - & 0.26$\times$ & 0.23$\times$ & 0.67$\times$\\
         \hline
    \end{tabular}
        \\ {\footnotesize\fndag values obtained from \citet{jaxpruner}}
        \\ {\footnotesize\textasteriskcentered values obtained from \citet{evci2020rigging}}
        \\ {\footnotesize\P values obtained from \citet{Lasby2023srigl}, neuron sparsity estimated from Fig. 3b}
        \\ {\footnotesize\S values obtained from \citet{ChaseLFSHWMMPL23}}
\end{table*}

Our method is a structured pruning approach that removes entire neurons during training, transitioning from a dense network to a sparse one. The methods we compare with follow this paradigm. 
We employ the following structured pruning baselines: Crop-it/EarlyCrop \citep{DBLP:conf/icml/RachwanZCGAG22}, SNAP \citep{verdenius2020snipit} and a modified version using the early pruning strategy from \citet{DBLP:conf/icml/RachwanZCGAG22} (identified as EarlySNAP). \modified{We trained these baselines using the recommended configuration of the original authors; in particular, we did not apply the regularization schedule used in our method.} 

\subsection{Results}\label{subsec:results}

DemP demonstrates superior performance when paired with Adam optimizer, consistently outperforming all baselines at high sparsity across the board. The margin in test error can reach up to 2.25\% when training on CIFAR-10 and ImageNet (Fig.~\ref{fig:res18_cifar10} and \ref{fig:res18_leaky}, and Table~\ref{tab:res50_results}). \nips{Notably, the highest sparsification maintaining performance results in a 2.45$\times$ faster training on ResNet-18 and 3.56$\times$ on VGG-16.}
The results are more contrasted with SGDM: DemP performs similarly to baselines on ResNet-18 benchmarks (see Fig.~\ref{fig:res18_cifar10} and \ref{fig:res18_leaky}), is outperformed at higher sparsities on VGG-16 (see Fig.~\ref{fig:vgg16_cifar10}) but slightly outperforms baselines when training a ResNet-50 on ImageNet (refer to Table~\ref{tab:res50_sgdm_results}).
Additionally, DemP provides significant speedup when training on ImageNet with both optimizers, surpassing baselines yet again (Tables \ref{tab:res50_speedup_flops} and  \ref{tab:res50_sgdm_results}). 

\arxivv{Furthermore, as shown in Fig.~\ref{fig:res18_leaky}, these conclusions hold when a ResNet-18 is trained with Leaky ReLU activations instead of ReLU, highlighting DemP's adaptability across different activation functions}.

\begin{figure*}[t!]
    \centering
    \vspace{-10pt}
    \includegraphics[width=\textwidth]{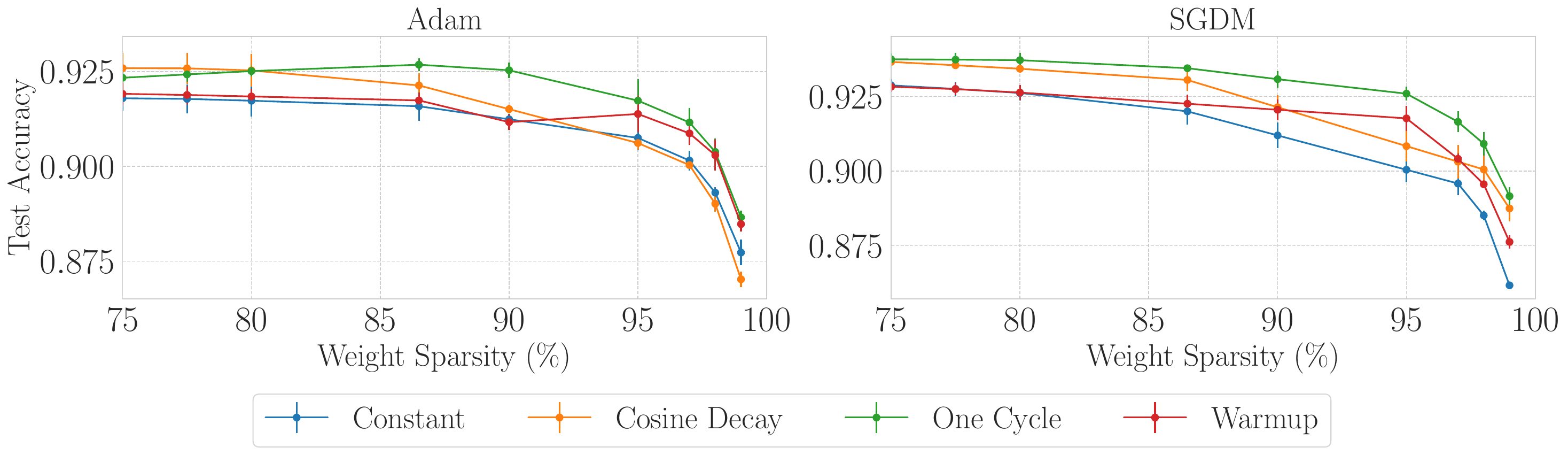}
    \vskip -0.1in
    \caption{Impact of different schedules over the regularization parameter for DemP, concluding that a one-cycle scheduler is a good default choice. Experiments were performed with ResNet-18 on CIFAR-10, with Lasso($\gamma$) regularization, across three seeds. \arxivv{Higher sparsities are obtained by increasing the peak strength of the added scheduled regularization.} \textbf{Left:} With Adam optimizer. \textbf{Right:} With SGDM optimizer.}
    \label{fig:reg_param_schedule_ablation}
    \vskip -0.1in
\end{figure*}

\begin{figure*}[b!]
    \centering
    \vskip -0.1in
    \includegraphics[width=\textwidth]{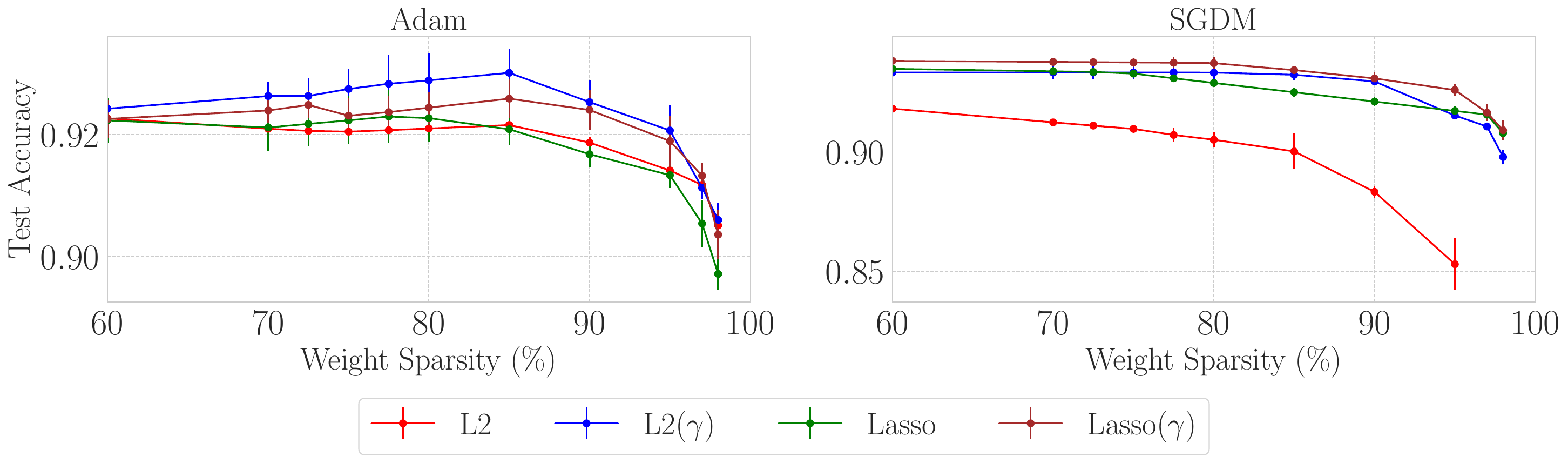}
    \vskip -0.1in
    \caption{ResNet-18 networks trained on CIFAR-10 with different added regularization strategies, over three seeds. $\cdot(\gamma)$ denotes when regularization is only applied on the scale parameters of the normalization layer. \arxivv{Higher sparsities are obtained by increasing the peak strength of the added scheduled regularization.} \textbf{Left:} With Adam, using L2($\gamma$) regularization slightly outperforms other strategies. \textbf{Right:} Using SGDM, the differences in performance become more pronounced, with Lasso regularization applied to scale parameters providing a favorable balance between sparsity and performance.}
    \label{fig:regularizer_choice}
\end{figure*}

\begin{figure*}[bt]
    \centering
    \includegraphics[width=\textwidth]{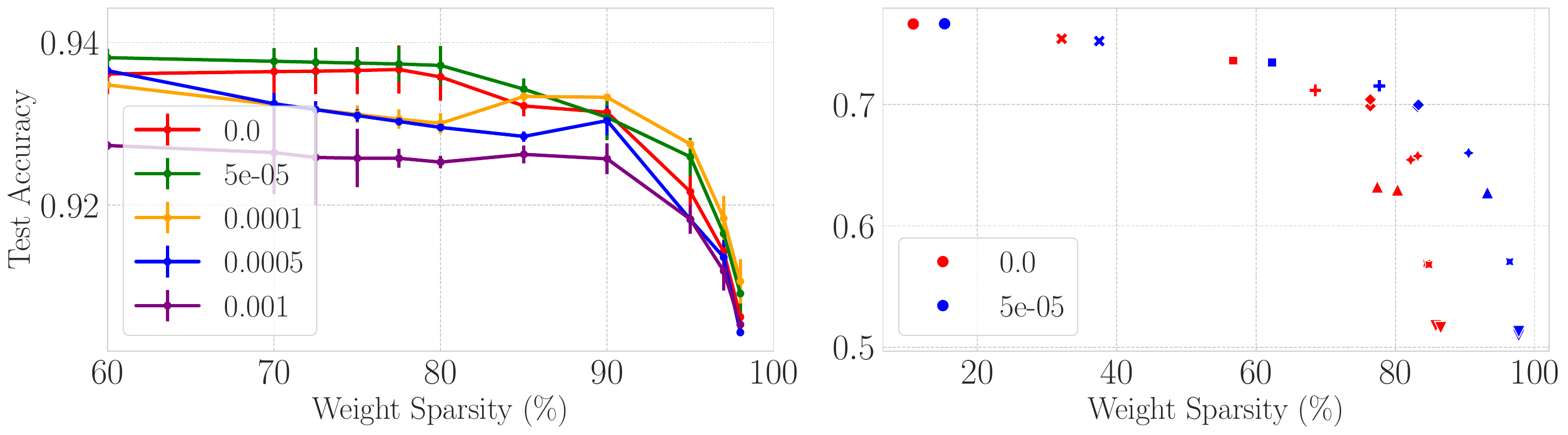}
    \vskip -0.1in
    \caption{\textbf{Left:} ResNet-18 networks trained on CIFAR-10 with SGDM over three seeds. Adding a very small amount of Gaussian noise ($\sigma^2=5 \times 10^{-5})$ to the live neurons offers the best balance between performance and sparsity. \textbf{Right:} The addition of noise proves very impactful in increasing sparsity without affecting performance on a ResNet-50 trained on ImageNet. Each symbol represents a distinct peak regularization value for scale parameters, where each data point represents a distinct seed.}
    \label{fig:noise_impact}
    \vskip -0.1in
\end{figure*}

\subsection{Ablation}\label{subsection:pruning_method_ablation}
\arxiv{\textbf{Regularizers.} While the methodology works with both L2 and Lasso regularization, and either applied to the entire parameter space or specifically to the scale normalization parameters ($\gamma$), we empirically found that Lasso regularization of the scaling worked best for SGDM (Fig.~\ref{fig:regularizer_choice}). For Adam, L2 regularization of the scaling slightly outperforms Lasso regularization of the scaling (Fig.~\ref{fig:regularizer_choice}). As such, applying regularization solely to the scale parameters is beneficial to performance. For simplicity, we opted for Lasso regularization of the scale parameters in all subsequent experiments. Nevertheless, in the absence of normalization layers, traditional regularization methods remain viable for inducing sparsity in the model.}

\textbf{Dynamic Pruning.} To realize computational gain during training, we dynamically prune the NN at every $t$ steps (with a default of $t=5000$). Dead neurons are removed almost as soon as they appear, preventing their revival. This strategy enables faster training with no significant change in performance as shown in Fig.~\ref{fig:dyn_pruning_difference}. 
We note that this smooth gradual pruning process is compatible with our approach in part because there is no added cost for computing the pruning criterion. 

\textbf{Regularization Schedule.} \arxiv{We test our hypotheses about regularization scheduling by comparing the one-cycle method with warmup, cosine decay, and constant schedules. Empirically, we confirm in Fig.~\ref{fig:reg_param_schedule_ablation} that using a one-cycle scheduler for the regularization parameter ($\lambda$) is a good strategy (Appendix~\ref{app:reg_param_schedule}).}

\begin{figure*}[bt]
    \centering
    \includegraphics[width=\textwidth]{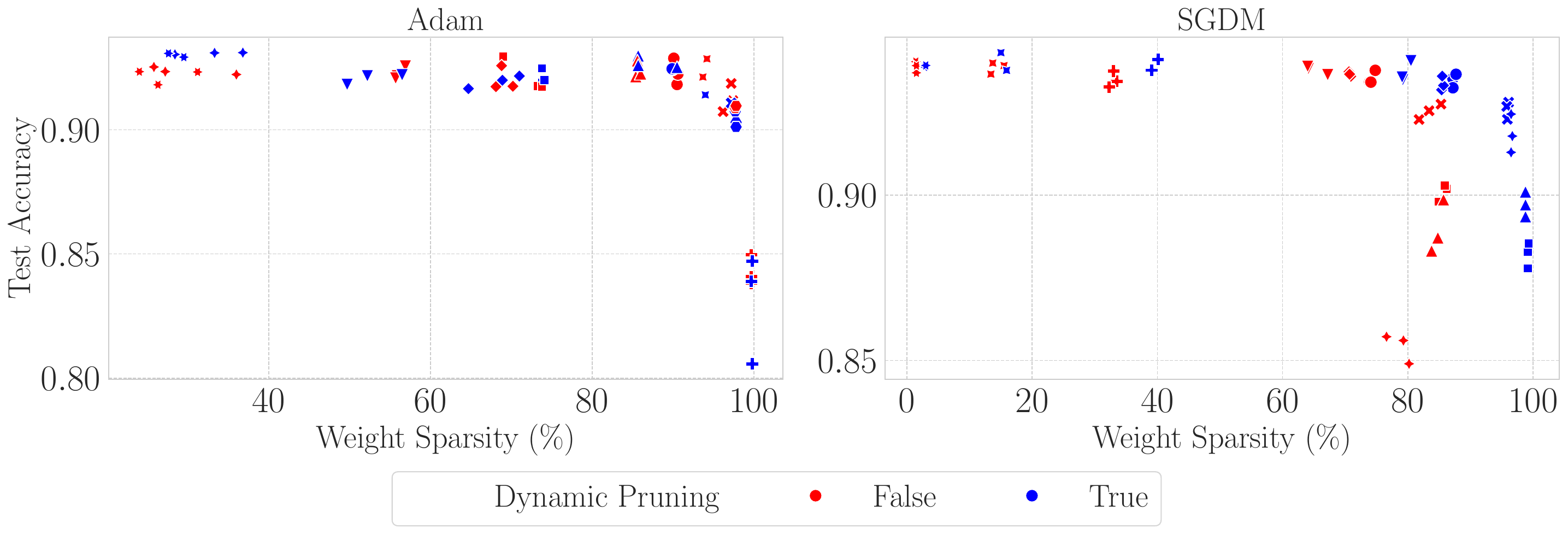}
    \vskip -0.1in
    \caption{Measuring the impact of dynamic pruning on the accuracy-sparsity tradeoff. The symbols denote different regularization strengths. Experiments were performed with ResNet-18 on CIFAR-10 and Lasso($\gamma$) regularization (three seeds). \textbf{Left:} With Adam, there are minimal differences between the two strategies, indicating that dynamic pruning does not affect performance. \textbf{Right:} With SGDM, dynamic pruning further improves sparsity without degrading performance, while also providing training speedup.}
    \label{fig:dyn_pruning_difference}
    \vskip -0.1in
\end{figure*}

\textbf{Added noise.} \arxiv{Adding asymmetric noise is particularly beneficial to further increase sparsity (see Fig.~\ref{fig:noise_impact}). 
The noise is kept small, with a peak variance at $\sigma^2=5 \times 10^{-5}$ and following the same schedule as the added regularization for maximal effect. We believe it acts as the small additional push helping neurons that linger close to the inactive region boundary to cross it.}

\arxivv{We conducted experiments to assess the necessity of using asymmetric noise in DemP (Fig.~\ref{fig:pruning_with_noise_res18}). We found that when all other aspects of DemP remained constant (including dynamic pruning), asymmetric noise was not strictly essential. Yet, we note that dynamic pruning mimics the effect of asymmetric noise: removal of dead neurons also removes the possibility of random revival. We decided to keep the added noise asymmetric due to significant differences observed in other settings (Fig.~\ref{fig:noise_only_for_updates}, middle) where symmetric noise reduces dead neuron accumulation considerably due to neuron revival.}

\textbf{Additional ablations} on the death criterion relaxation and weight decay are discussed respectively in Appendix \ref{app:dead_criterion_relaxation} and Appendix \ref{app:weight_decay}. We also discuss the choice of activation functions in Appendix \ref{app:ablation_activation_fn}, showing that DemP offers similar performance when using Leaky ReLU activations (Fig. \ref{fig:res18_leaky}).

\begin{table}
    \centering
    \renewcommand{\arraystretch}{1.5}  
    \caption{Using DemP to prune the MLP layers of a ViT-B/16 model. Solely by injecting artificial noise ($\lambda=0$), DemP removes up to 90\% of the neurons. We report the neural sparsity across the MLP layers but provide the equivalent weight sparsity for the entire network.}
    \label{tab:vit_early_results}
    \vspace{0.25cm}
    \begin{tabular}{ccccc}
        \hline
         \textbf{Method} & \textbf{\makecell{Test \\ accuracy}} & \textbf{\makecell{Neuron \\ sparsity}} & \textbf{\makecell{Weight \\ sparsity}} & \textbf{\makecell{Training \\ time}}\\
        \hline
        Dense & 78.75\%  & - & - & 1.0$\times$\\
        \hline
        DemP & 71.38\% & 90.6\% & 59.4\% & 0.77$\times$\\
         DemP + ReLU & 77.62\% & 43.7\% & 29.6\% & 0.93$\times$\\
        \hline
    \end{tabular}
\end{table}

\subsection{\nips{Transformers}}\label{subsection:transformers_main_body}
\nips{To assess the compatibility of DemP with transformer-based architectures, we conducted exploratory experiments on Vision Transformer (ViT) networks \citep{VisionTransformersDosovitskiyB0WZ21}. Our focus was on pruning the fully connected layers because they contain the majority of the parameters and use GELU activation functions. Notably, we observed a substantial number of neurons dying during regular training, which led us to explore DemP's performance without regularization ($\lambda=0$), retaining only the added noise.}

\nips{The exploratory results for a ViT-B/16 model trained on ImageNet are presented in Table \ref{tab:vit_early_results}. We found that 90\% of the fully connected neurons were readily pruned using DemP. Additionally, replacing the GELU activations with ReLU \citep{relustrikebackMirzadehAMMTSRF} resulted in only a slight performance degradation while still allowing the pruning of 43\% of the neurons. Further details are provided in Appendix \ref{app:transformers}.} 

\section{Conclusion}
In this work, we have explored how various hyperparameter configurations such as the learning rate, batch size, regularization, architecture, and optimizer choices, collectively influence activation sparsity during neural network training. Leveraging this, we introduced Demon Pruning, a dynamic pruning method that controls the proliferation of saturated neurons during training through a combination of regularization and noise injection. Extensive empirical analysis on CIFAR-10 and ImageNet demonstrated superior accuracy-sparsity tradeoffs 
compared to strong structured dense-to-sparse training 
baselines.  

The simplicity of our approach allows for versatile adaptation. 
Integrating our method with existing pruning techniques is straightforward. \nips{Specifically, since DemP acts on the training dynamics, it can be used in conjunction 
with other dynamical pruning methods \citep{evci2020rigging, Lasby2023srigl, ChaseLFSHWMMPL23}.} \nips{We discuss limitations of our approach  in Appendix~\ref{app:extended_discussion}.}

\paragraph{Broader Impacts Statement.} Structured pruning methods, even without specialized sparse computation primitives \citep{elsen2020fast, gale2020sparse}, can efficiently leverage GPU hardware \citep{wen2016learning} compared to unstructured methods, which is crucial as deep learning models grow and environmental impacts escalate \citep{strubell2019energy, lacoste2019quantifying, henderson2020towards}. Developing energy-efficient methods that can be widely adopted is essential.

However, while efforts to enhance the efficiency of deep learning training processes can reduce computational costs and energy requirements, they may inadvertently amplify concerns associated with the rapid advancement of AI. The swift progress in AI capabilities raises significant risks,  from ethical dilemmas to information manipulation. By accelerating AI development and increasing its accessibility,   research like ours may exacerbate ongoing issues.

\subsubsection*{Acknowledgments}
This research was enabled in part by compute resources provided by Mila, the Digital Research Alliance of Canada, and NVIDIA. 

\clearpage
\bibliography{cleaned_demon_pruning.bib}
\bibliographystyle{tmlr}

\clearpage
\appendix
\section{Dead neurons in convolutional layers}\label{app:dead_def_conv}
\modified{In convolutional layers, ReLU is applied element-wise to the pre-activation feature map. We consider an individual neuron (channel) dead if all elements in the feature map after activation are 0. Formally, in this case, the definition from Section \ref{sec:analysis}} becomes: 

\modified{\textbf{Definition:} The $j$-th neuron/channel in the convolutional layer $\ell$ is \textit{inactive} if it consistently outputs a feature map (post-activation) with elements summing to zero on the entire training set, i.e $\sum_{k,l} F_{jkl}^{\ell} =0$. A neuron/channel that becomes and remains inactive during training is considered as \textit{dead}.}

\section{Biased Random Walk Model}\label{app:biased_rdm_walk}
This section aims to illustrate the role of asymmetric noise in neuron saturation through simple theoretical models, formalizing the intuition presented in Section \ref{sec:analysis}.

{\bf Setup.} We consider a network with parameter $\vw$ trained to  minimize the loss $L(\vw) = \frac{1}{n} \sum_{i=1}^n \ell_i(\vw)$, where $\ell_i(\vw)$ is the loss function on sample $i$, using stochastic gradient descent (SGD) based methods.  At each iteration,  this requires an estimate of the loss gradient $g(\vw) := \nabla L(\vw)$, obtained by computing the mean gradient on a random minibatch $b \subset \{1\cdots n\}$.  For simple SGD with learning rate $\eta$, the update rule takes the form
\begin{equation} \label{eq:Sgd}
    \vw_{t+1} = \vw_t  - \eta \hat{g}(\vw_t, b_t), \quad \hat{g}(\vw, b):= \frac{1}{|b|}  \sum_{i\in b} \nabla \ell_i(\vw).
\end{equation}

To formalize the intuition from section \ref{subsec:intuition_and_role_of_noise}, we follow a standard line of work \citep{Cheng2020_langevin}  taking the view of SGD in Eq.~\ref{eq:Sgd} as a biased random walk \citep{ANDERSON1998221}, described by the Langevin process, 
\begin{equation}
\label{eq:update_with_noise}
    \vw_{t+1} = \vw_t  - \eta g(\vw_t) + \sqrt{\eta}\, \hat{\vxi}(\vw_t, b_t) 
\end{equation}
where the zero mean variable $\hat{\vxi}(\vw, b) := \sqrt{\eta}\, (g(\vw) - \hat{g} (\vw, b))$ represents the gradient noise. 
In the limit of small learning rate, Eq. \ref{eq:update_with_noise} is also well approximated \citep[][Theorem 2]{Cheng2020_langevin} by the stochastic differential equation (SDE),
\begin{equation} \label{eq:SGD_SDE}
d\vw_{t}= - g(\vw_t) + M(\vw_t) d\vB_t,  
\end{equation}
where $\vB_t$ denotes a standard Brownian motion and $M(\vw) := \sqrt{\E_b[\hat{\vxi}(\vw, b) \hat{\vxi}(\vw, b)^\top}]$.

In SGD, a crucial characteristic of the gradient noise is its  {\it multiplicative} nature, meaning it depends on the parameter. Systems with multiplicative noise exhibit a well-known property where regions with lower noise magnitude tend to act as attractors \citep{OksendalSDE_book}. Intuitively,  the noise propels the system away from regions where it has a higher impact, leading to a higher probability of staying in regions where it has a lower impact. Mathematically, this manifests as a tendency for the invariant distribution associated with SDE in Eq.~\ref{eq:SGD_SDE} to have a higher probability density in regions of lower noise magnitude. 

We illustrate this phenomenon using two (drastically) simplified versions of the dynamics described by Eq.~\ref{eq:update_with_noise} and \ref{eq:SGD_SDE}: 

\subsection{Absorbing random walk}

We consider a one-dimensional absorbing random walk with a boundary at zero described by the SDE: 
\begin{equation} \label{eq:SGD_SDE_simple}
dw_t = \begin{cases} \sqrt{\eta} dB_t  \, &\mbox{as long as} \, w_t >0  \\
0 \, & \mbox{otherwise}
\end{cases}
\end{equation}
It models a system subject to noise in an `active' region $w >0$, which gets stopped at 0---and remains there, hence `dies'---once it hits 0. It can be thought of as a simplified description of a regime where the dynamics (Eq.~\ref{eq:update_with_noise}) is dominated by noise, such as e.g., a neuron encoding features with very low correlation to the task. 

The {\it survival probability} at time $t$  is the probability that the system is still active $t$, i.e $w_t >0$. It is related to the distribution of the first hitting time at 0 of a standard Brownian motion,  $P(T_0 > t)$, where $T_0 = \inf\{t \geq 0 : B_t = 0\}$. A well-known property of Brownian motion \citep{KaratzasBrownianMotion_book} is $\lim_{t \to \infty} P(T_0 > t) = 0$, which shows that the system in Eq.~\ref{eq:SGD_SDE_simple} eventually dies with probability $1$.  More generally, the following proposition specifies the dependence on learning rate and initialization: 
\begin{proposition} \label{prop:hitting_time}
Consider the system (\ref{eq:SGD_SDE_simple}) initialized at $w_0 >0$. The survival probability at time $t>0$ is given by
\begin{equation}
P(w_t >0  \, |  \, w_0) =\mathrm{erf}\left({\frac{w_0}{\sqrt{2\eta t}}}\right):= \sqrt{\frac{2}{\pi}} \int_0^{ w_0 /\sqrt{\eta t}}  e^\frac{-w^2}{2} d w 
\end{equation} 
\end{proposition}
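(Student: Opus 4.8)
The plan is to reduce the survival probability of the absorbing process in Eq.~\ref{eq:SGD_SDE_simple} to a classical first-hitting-time computation for a rescaled Brownian motion, and then evaluate that probability explicitly using the reflection principle. First I would observe that in the active region the process is simply $dw_t = \sqrt{\eta}\, dB_t$, so that, started at $w_0>0$, it coincides in law with $w_0 + \sqrt{\eta}\, B_t$ up until the first time it touches the origin. Since the dynamics are frozen at and below $0$, the event $\{w_t > 0\}$ is exactly the event that the (unstopped) Brownian path $w_0 + \sqrt{\eta}\, B_s$ has not yet hit $0$ by time $t$, i.e. $\{T_0 > t\}$ where $T_0 = \inf\{s \geq 0 : w_0 + \sqrt{\eta}\, B_s = 0\}$. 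This identifies the survival probability with $P(T_0 > t \mid w_0)$, the quantity already referenced in the paragraph preceding the proposition.

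Next I would compute $P(T_0 > t)$ for this shifted, scaled Brownian motion. The cleanest route is the reflection principle: for a standard Brownian motion $B$ started at $0$, the running minimum satisfies $P(\min_{0 \le s \le t} B_s \le -a) = 2\, P(B_t \le -a) = 2\, P(B_t \ge a)$ for $a > 0$. Writing the hitting condition as $\min_{0 \le s \le t} \sqrt{\eta}\, B_s \le -w_0$, i.e. $\min_{0 \le s \le t} B_s \le -w_0/\sqrt{\eta}$, the reflection principle gives the hitting probability $P(T_0 \le t) = 2\, P(B_t \ge w_0/\sqrt{\eta})$. Using $B_t \sim \mathcal{N}(0, t)$ and hence $B_t/\sqrt{t} \sim \mathcal{N}(0,1)$, the survival probability becomes
\begin{equation}
P(w_t > 0 \mid w_0) = 1 - 2\, P\!\left( \frac{B_t}{\sqrt{t}} \ge \frac{w_0}{\sqrt{\eta t}} \right) = 1 - 2 \int_{w_0/\sqrt{\eta t}}^{\infty} \frac{1}{\sqrt{2\pi}}\, e^{-u^2/2}\, du.
\end{equation}
I would then rewrite the complementary tail as a finite integral from $0$ to $w_0/\sqrt{\eta t}$ to match the error-function form claimed in the statement, using $1 - 2\int_a^\infty \frac{1}{\sqrt{2\pi}} e^{-u^2/2}\, du = \sqrt{\tfrac{2}{\pi}} \int_0^a e^{-u^2/2}\, du$, which is the normalization identity for the Gaussian density. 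This yields precisely $\mathrm{erf}(w_0/\sqrt{2\eta t})$ in the paper's normalization convention.

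The main obstacle I anticipate is bookkeeping around the two different normalization conventions for the error function rather than any genuine analytic difficulty: the statement defines $\mathrm{erf}(x)$ via an integral with upper limit $w_0/\sqrt{\eta t}$ (not $w_0/\sqrt{2\eta t}$) and a Gaussian weight $e^{-w^2/2}$, which is a nonstandard but internally consistent choice, so I must be careful that my factors of $2$ inside the exponent and inside the scaling agree with theirs. A secondary technical point worth stating cleanly is the measurability/strong-Markov justification that the process genuinely remains absorbed once it reaches $0$, so that $\{w_t > 0\}$ and $\{T_0 > t\}$ coincide as events; this is immediate from the piecewise definition in Eq.~\ref{eq:SGD_SDE_simple} but deserves one sentence. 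Everything else reduces to the reflection principle and a change of variables, both of which are standard.
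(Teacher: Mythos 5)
Your proposal is correct and follows essentially the same route as the paper: identify survival with the event that the unstopped Brownian motion $w_0 + \sqrt{\eta}\,B_s$ has not hit $0$, apply the reflection principle to get $P(T_0 \le t) = 2\,P(B_t \ge w_0/\sqrt{\eta})$, and evaluate the Gaussian integral. The only difference is cosmetic: you invoke the running-minimum form of the reflection principle as a known result, whereas the paper re-derives it inline via the law of total probability and the strong Markov property.
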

Prop. \ref{prop:hitting_time} $(i)$ confirms that the system eventually dies almost surely, since for all $w_0 >0$ the survival probability decays to 0 as $t\to +\infty$ ; $(ii)$ implies that for any given finite horizon time $t$, the smaller the initialization, the more likely the system is to be dead at $t$, $(iii)$   illustrates how a noisier environment (i.e higher diffusive coefficient $\eta$ representing the learning rate) accelerates this dying process.
\begin{proof} 
This is a standard application of the reflection property of Brownian motions \citep[e.g.,][]{lawler2016introduction}. 
Let $\bar{w}_t$ be the solution of Eq.~\ref{eq:SGD_SDE_simple} with {\it no} boundary condition. For the same initial condition, $w_t$ and $\bar{w}_t$ have the same distribution as long as $w_t >0$. Let  $T_0 = \inf\{t \geq 0 : \bar{w}_t = 0\}$ be the first hitting time of $\bar{w}_t$ at $0$.  The survival probability of $w_t$ can be expressed in terms of the distribution function of $T_a$ as
\begin{equation} \label{appeq:surv} 
P(w_t >0  \, |  \, w_0) 
= 1-P(T_0 \leq t)
\end{equation} 

The reflection property states that $P(T_0 \leq t) = 2 P(\bar{w}_t \leq 0)$. To show this, let us first use the law of total probability to decompose $P(T_0 \leq t)$ as 
\begin{equation} \label{appeq:decomp}
P(T_0 \leq t) = P(T_0 \leq t,\bar{w}_t \leq 0) + P(T_0 \leq t, \bar{w}_t > 0)  
\end{equation} 
For the first term, we note that $\bar{w}_t \leq 0$ implies $T_0 \leq t$ with probability 1, so  $P(T_0 \leq t,\bar{w}_t \leq 0) = P(\bar{w}_t \leq 0)$. For the second term,  we note that by the strong  Markov property, $\bar{w}_t:= \bar{w}_t - \bar{w}_{T_0}: t \geq T_0$ is a (scaled) standard Brownian motion, whose distribution is symmetric about the origin:  therefore $P(\bar{w}_t > 0 | T_0 \leq t) = P(\bar{w}_t < 0 | T_0 \leq t)$. Thus,
\begin{align} 
P(T_0 \leq t, \bar{w}_t > 0) & =P(\bar{w}_t > 0 \, | \, T_0 \leq t) P(T_0 \leq t) \nonumber \\
&=P(\bar{w}_t < 0 \, | \, T_0 \leq t)P(T_0 \leq t) \nonumber \\
&=P(\bar{w}_t < 0,  T_0 \leq t) \nonumber \\
&= P(\bar{w}_t \leq 0,  T_0 \leq t)  
\end{align}
where we have used the fact that $P(\bar{w}_t = 0) = 0$. As before, this last term equals $P(\bar{w}_t \leq 0)$; and the reflection property is proved. 
Finally, $d \bar{w}_t = \sqrt{\eta} B_t$ is a scaled Brownian motion with initial value $w_0$, so $\bar{w}_t$ is normally distributed with mean $w_0$ and variance $\eta t$. Thus, 
\begin{equation*} 
P(w_t >0  \, |  \, w_0) = 1-2 P(\bar{w}_t \leq 0) =  \sqrt{\frac{2}{\pi}}\int_0^{w_0/\sqrt{\eta t}} e^{-\frac{w^2}{2}} dw
\end{equation*}
\end{proof}

\subsection{Geometrical random walk} 

The second example illustrates the stabilizing effects of multiplicative noise for systems such as Eq.~\ref{eq:update_with_noise} near unstable critical points \citep{OksendalSDE_book}. Consider the SGD dynamics (Eq.~\ref{eq:Sgd}) with a  diagonal quadratic approximation of  the loss around $0$, i.e.,  we assume 

\begin{equation}
 \ell_i(\vw)  = \ell_i(0) + \frac12 \vw^T H_i \vw, \quad H_i = \mathrm{Diag}(h_{i1}, \cdots h_{id})
 \end{equation}  
where  $H_i$ is some sample-dependent diagonal matrix. In such a setting, we model the dynamics of each parameter by a geometrical random walk, 
 \begin{equation} \label{eq:geomBrownian}
 w_{t+1} = w_t - \eta (h +\zeta_t) w_t  
 \end{equation}
 where $h$ is one of the Hessian eigenvalues and $\zeta_t$ is a noise variable sampled from some zero mean distribution.  The case of a negative eigenvalue ($h<0$) is particularly interesting since it corresponds to an unstable direction (negative curvature) in the absence of noise. In what follows, to ensure the stability of the dynamics, we assume that the noise variable $\zeta$ is bounded and the learning rate is small enough to ensure that $\eta |\zeta| < 1$. We also assume the noise distribution is symmetric about $0$. 
 \begin{lemma} \label{app:lemma1}
 Let $\mu = \E[\log(1-\eta(h + \zeta))]$. For all $\epsilon >0$ and  $\delta>0$, there exists $t_0(\epsilon, \delta)$ such that for all $t\geq t_0$, with probability at least $1-\delta$, 
\begin{equation} 
e^{t (\mu - \epsilon)} |w_0| \leq |w_t| \leq  e^{t (\mu + \epsilon)} |w_0|
\end{equation}
In particular, $w_t \to 0$ w.h.p whenever $\mu <0$. 
\end{lemma}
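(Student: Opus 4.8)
The plan is to exploit the \emph{multiplicative} structure of the recursion \eqref{eq:geomBrownian} by passing to logarithms, which turns the product of random factors into a sum of i.i.d.\ terms amenable to a law-of-large-numbers argument. Writing the update as $w_{t+1} = (1-\eta(h+\zeta_t))\,w_t$, I would first note that under the standing assumptions---$\zeta$ bounded with $\eta|\zeta|<1$, and $\eta$ small enough that $\eta h < 1$---each factor $1-\eta(h+\zeta_t)$ is strictly positive and bounded away from both $0$ and $\infty$. Unrolling the recursion therefore gives $|w_t| = |w_0|\prod_{s=0}^{t-1}(1-\eta(h+\zeta_s))$, and taking logs yields
\begin{equation}
\log|w_t| - \log|w_0| = \sum_{s=0}^{t-1} X_s, \qquad X_s := \log\bigl(1-\eta(h+\zeta_s)\bigr),
\end{equation}
where the $X_s$ are i.i.d., bounded (say $X_s \in [a,b]$), with common mean $\mu = \E[\log(1-\eta(h+\zeta))]$, exactly the quantity appearing in the statement.

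Next I would establish concentration of the empirical average $\tfrac1t\sum_{s=0}^{t-1} X_s$ around $\mu$. Since the $X_s$ are bounded and independent, Hoeffding's inequality gives $P(|\tfrac1t\sum_{s<t} X_s - \mu| > \epsilon) \le 2\exp(-2t\epsilon^2/(b-a)^2)$ for every fixed $t$. To upgrade this to a statement holding \emph{simultaneously for all} $t\ge t_0$---as the lemma demands with a single $t_0(\epsilon,\delta)$---I would take a union bound over $t\ge t_0$; because the tail probabilities decay geometrically in $t$, the sum $\sum_{t\ge t_0}2\exp(-2t\epsilon^2/(b-a)^2)$ can be made smaller than $\delta$ by choosing $t_0$ large enough. (Equivalently, one may invoke the strong law of large numbers, define the random first time $T$ after which all empirical averages lie within $\epsilon$ of $\mu$, and pick $t_0$ so that $T\le t_0$ with probability at least $1-\delta$.) On the resulting event of probability at least $1-\delta$ one has $|\tfrac1t\sum_{s<t}X_s - \mu|\le\epsilon$ for every $t\ge t_0$.

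Finally I would translate the bound back to $|w_t|$. On that event, $t(\mu-\epsilon) \le \log|w_t| - \log|w_0| \le t(\mu+\epsilon)$ for all $t\ge t_0$, and exponentiating gives the claimed two-sided estimate $e^{t(\mu-\epsilon)}|w_0| \le |w_t| \le e^{t(\mu+\epsilon)}|w_0|$. The ``in particular'' assertion follows by specializing: if $\mu<0$, choose any $\epsilon\in(0,-\mu)$ so that $\mu+\epsilon<0$; then the upper bound forces $|w_t|\to 0$ on the high-probability event, i.e.\ $w_t\to0$ with probability at least $1-\delta$ for arbitrary $\delta$.

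I anticipate the main obstacle to be the passage from the fixed-$t$ concentration bound to the uniform-in-$t$ statement with a single threshold $t_0(\epsilon,\delta)$: one must control the infinite family of events indexed by $t\ge t_0$ at once, rather than a single time. The summability of the Hoeffding tails (or, alternatively, the almost-sure-convergence route through the SLLN) is precisely what makes this step work. A secondary and more routine point is verifying that the stated assumptions keep the factors $1-\eta(h+\zeta_s)$ positive and the variables $X_s$ bounded, which is what legitimizes taking logarithms in the first place.
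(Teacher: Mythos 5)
Your proposal is correct and rests on the same decomposition as the paper's proof: unroll the recursion in Eq.~\ref{eq:geomBrownian} into the product $w_t = w_0\prod_{s<t}\bigl(1-\eta(h+\zeta_s)\bigr)$, take logarithms to turn it into a sum of i.i.d.\ bounded variables with mean $\mu$, and conclude by concentration of the empirical mean. Where you differ is in the concentration tool and in what it delivers. The paper invokes only the weak law of large numbers, $\lim_{t\to\infty} P(|\bar{z}_t-\mu|<\epsilon)=1$, which gives the claimed bound for each fixed $t\ge t_0$ separately; that is the pointwise-in-$t$ reading of the lemma's quantifiers, and when $\mu<0$ it yields only $w_t\to 0$ \emph{in probability}. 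Your Hoeffding-plus-union-bound argument (or the SLLN variant you sketch) buys two genuine improvements: an explicit, quantitative threshold $t_0(\epsilon,\delta)$ rather than a purely asymptotic one, and a single event of probability at least $1-\delta$ on which the two-sided bound holds for \emph{all} $t\ge t_0$ simultaneously, which upgrades the ``in particular'' conclusion to almost-sure convergence of $w_t$ to $0$ (let $\delta\downarrow 0$). Both routes prove the statement as written; yours is strictly stronger and resolves the ambiguity in the order of quantifiers, at the modest cost of needing boundedness of the summands, which you correctly verify from the assumptions $\eta|\zeta|<1$ and $\zeta$ bounded. One cosmetic point: the paper's proof writes $z_j:=\log(1+\eta(h+\zeta_j))$, a sign typo for $\log(1-\eta(h+\zeta_j))$; your $X_s$ carries the sign consistent with the recursion and with the definition of $\mu$, which is the correct one.
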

\begin{proof}  
This is a consequence of the law of large numbers applied to the mean $\bar{z}_t = \frac{1}{t} \sum_{j=0}^{t-1} z_j$ of $i.i.d$ variables $z_j := \log(1+\eta (h + \zeta_j))$: for all $\epsilon >0$, $\underset{t \to +\infty}{\lim} P(|\bar{z}_t - \mu| <\epsilon) = 1$.  This implies that w.h.p, 
\begin{equation} \label{lemma_conv}
e^{t (\mu - \epsilon)} \leq e^{t \bar{z}_t} \leq  e^{t (\mu + \epsilon)}
\end{equation}
Now solving Eq.~\ref{eq:geomBrownian} yields 
\begin{equation} \label{app_lemma_proof}
w_t = w_0 \prod_{j=1}^{t-1} (1- \eta(h+\zeta_j)) = w_0 \prod_{j=1}^{t-1} e^{z_j} = w_0\,  e^{t \bar{z}_t} 
\end{equation}
Combining  with Eq.~\ref{lemma_conv} proves Lemma \ref{app:lemma1}.   
\end{proof}

\begin{lemma} \label{app:lemma2}: There exists a range of values for the learning rate $\eta$ for which $\mu <0$,  making the direction stable w.h.p, despite having $h<0$. 
\end{lemma}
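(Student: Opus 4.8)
The plan is to study $\mu$ as a function of the learning rate $\eta$ on the stability interval $\eta\in[0,1/B)$, where $B:=\mathrm{ess\,sup}\,|\zeta|$ is the bound that guarantees $\eta|\zeta|<1$ and keeps the logarithm's argument positive. Writing $X:=h+\zeta$, so that $\E[X]=h<0$, we have $\mu(\eta)=\E[\log(1-\eta X)]$ with $\mu(0)=0$. Since $\mu$ is continuous in $\eta$, to exhibit a whole \emph{range} of learning rates with $\mu<0$ it suffices to produce a single point $\eta_1\in(0,1/B)$ at which $\mu(\eta_1)<0$: an open neighbourhood of $\eta_1$ then inherits the same sign. (One can say more using that $\mu''(\eta)=\E[-X^2/(1-\eta X)^2]<0$, so $\mu$ is concave with $\mu'(0)=-h=|h|>0$, giving the global picture of a function that rises, peaks, and descends through zero.)

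The key algebraic step is to exploit the assumed symmetry of $\zeta$ about $0$. Pairing the realizations $\zeta$ and $-\zeta$ and setting $u:=1-\eta h=1+\eta|h|>0$, I would rewrite
\begin{equation}
\mu(\eta)=\log\!\big(1+\eta|h|\big)+\tfrac12\,\E\!\left[\log\!\Big(1-\frac{\eta^2\zeta^2}{(1+\eta|h|)^2}\Big)\right].
\end{equation}
This decomposition isolates the two competing effects: the first term is a strictly positive \emph{drift} coming from the negative curvature $h<0$ (the unstable direction), while the second term is strictly negative and is the \emph{stabilizing} contribution of the multiplicative noise.

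To show the noise term can win, I would upper bound $\mu$ using $\log(1+x)\le x$ on the drift and $\log(1-x)\le -x$ on the noise, obtaining
\begin{equation}
\mu(\eta)\;\le\;\eta|h|-\frac{\eta^2\,\E[\zeta^2]}{2\,(1+\eta|h|)^2}.
\end{equation}
It then remains to choose $\eta$ so that the right-hand side is negative, i.e. $2|h|\,(1+\eta|h|)^2<\eta\,\E[\zeta^2]$. This is a quadratic inequality in $\eta$; analyzing its discriminant shows it has solutions precisely when the noise is strong enough relative to the curvature (concretely, when $\E[\zeta^2]$ exceeds a fixed multiple of $h^2$), and any such solution lying in $(0,1/B)$ furnishes a learning rate with $\mu(\eta)<0$. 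Invoking Lemma~\ref{app:lemma1} then gives $w_t\to 0$ with high probability on this range, establishing that the direction is stabilized despite $h<0$.

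I expect the main obstacle to be the mismatch of scales between the two terms: the destabilizing drift is $O(\eta)$ and therefore dominates for very small steps, so stabilization is impossible near $\eta=0$ and one must operate in an intermediate window bounded away from zero. The crux is thus to certify that this window is nonempty while simultaneously respecting the constraint $\eta|\zeta|<1$; this is exactly where a quantitative lower bound on the noise strength enters, and where the qualitative phrase ``there exists a range'' acquires its precise meaning.
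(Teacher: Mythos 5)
Your proposal is correct, and it takes a genuinely different route from the paper's proof. The paper applies the third-order inequality $\log(1+x)\le x-\frac{x^2}{2}+\frac{x^3}{3}$ to $x=-\eta(h+\zeta)$, takes expectations (using the symmetry of $\zeta$ only to kill the odd moments $\E[\zeta]=\E[\zeta^3]=0$), and then shows the resulting quadratic $P(\eta)=|h|-\frac{\eta}{2}(h^2+\sigma^2)+\frac{\eta^2}{3}|h|(h^2+3\sigma^2)$ takes negative values by computing its minimum and observing that it tends to $-\infty$ as $|h|\to 0^+$ for fixed $\sigma^2$. You instead exploit the symmetry of $\zeta$ \emph{exactly}, pairing $\zeta$ with $-\zeta$ to obtain the identity $\mu(\eta)=\log(1+\eta|h|)+\tfrac12\,\E\bigl[\log\bigl(1-\eta^2\zeta^2/(1+\eta|h|)^2\bigr)\bigr]$, and only afterwards apply the cruder first-order bounds $\log(1+x)\le x$ and $\log(1-x)\le -x$; the sign condition then becomes the quadratic inequality $2|h|(1+\eta|h|)^2<\eta\sigma^2$, whose discriminant analysis yields the explicit threshold $\sigma^2>8h^2$ and an explicit interval $(\eta_-,\eta_+)$ of stabilizing step sizes. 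Your decomposition is more transparent—it separates the destabilizing drift from the stabilizing noise contribution before any approximation—and your structural observations ($\mu(0)=0$, $\mu'(0)=|h|>0$, $\mu$ concave) correctly establish that the set $\{\mu<0\}$ is a single window bounded away from $\eta=0$, something the paper's argument does not make visible; in exchange, the paper's sharper cubic bound is a one-line computation requiring no pairing trick. The step you flag as the remaining obstacle—verifying that the window meets the admissible domain $(0,1/B)$—is indeed needed to finish, and is easily supplied from your bound: $\eta_-\le 4|h|/(\sigma^2-4h^2)$, so the explicit condition $\sigma^2>4h^2+4|h|B$ (which implies $\sigma^2>8h^2$) guarantees $\eta_-<1/B$ and a nonempty window $(\eta_-,\min(\eta_+,1/B))$. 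Note that the paper's own proof silently ignores this same compatibility with the standing assumption $\eta|\zeta|<1$, so your treatment is, if anything, more careful on this point rather than less.
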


\begin{proof} This is a consequence of the inequality 
$\log(1+x) \leq x-\frac{x^2}{2} + \frac{x^3}{3}$ for all $x > -1$. Applying this inequality to $x  = - \eta(h+\zeta)$ and taking the average over $\zeta$ gives the upper bound
\begin{equation} 
\mu  \leq -\eta h - \frac{\eta^2}{2} (h^2 + \sigma^2) - \frac{\eta^3}{3}(h^3 + 3 h \sigma^2)
\end{equation} 
where $\sigma^2 := E[\zeta^2]$  and we used $E[\zeta] = E[\zeta^3] = 0$ by symmety about $0$. Now, the sign of this bound coincides with the sign of the degree 2 polynomials $P(\eta) := |h| - \frac{\eta}{2} (h^2 + \sigma^2) + \frac{\eta^2}{3}|h|(h^2 + 3 \sigma^2)$. We note that $P(0) > 0$ and that for a small enough ratio $|h|/\sigma$, it has two positive roots bounding an interval on which $P(\eta)< 0$. One way to see this is to compute the minimum
\begin{equation} 
\min_\eta P(\eta) = |h| - \frac{3}{16} \frac{(h^2 + \sigma^2)^2}{|h|(h^2 + 3\sigma^2)}, 
\end{equation} 
and to observe that it goes to $-\infty$ as $|h| \to 0^+$ for fixed $\sigma^2$.  
\end{proof}

\section{\modified{Few Dead Neurons Revive}}\label{app:overlap_no_revive}
\modified{While empirical observations have shown a gradual accumulation of dead neurons (Fig. \ref{fig:typical_accumulation}), we also observed that neurons can revive (Appendix \ref{app:training_time_impact}). To better assess the potential impact of reviving neurons on performance, we measured the overlap ratio ($|X \cap Y| / \min(|X|,|Y|)$) between the historical set of dead neurons at previous iterations and the set of dead neurons at the current iteration. This methodology directly follows \citet{sokar2023dormant}. The results in Fig.~\ref{fig:dead_neurons_overlap} show that most neurons (over 90\%) inactive at any point during training end up dead at the final iteration. This---coupled with our results showing that dying neurons can be dynamically pruned during training without impacting performance (Appendix \ref{app:dyn_pruning})---strongly suggests that neurons becoming inactive at any point during training in ReLU networks do not contribute significantly to the final performance of the trained model.}

\begin{figure*}[tb]
    \centering
    \includegraphics[width=\textwidth]{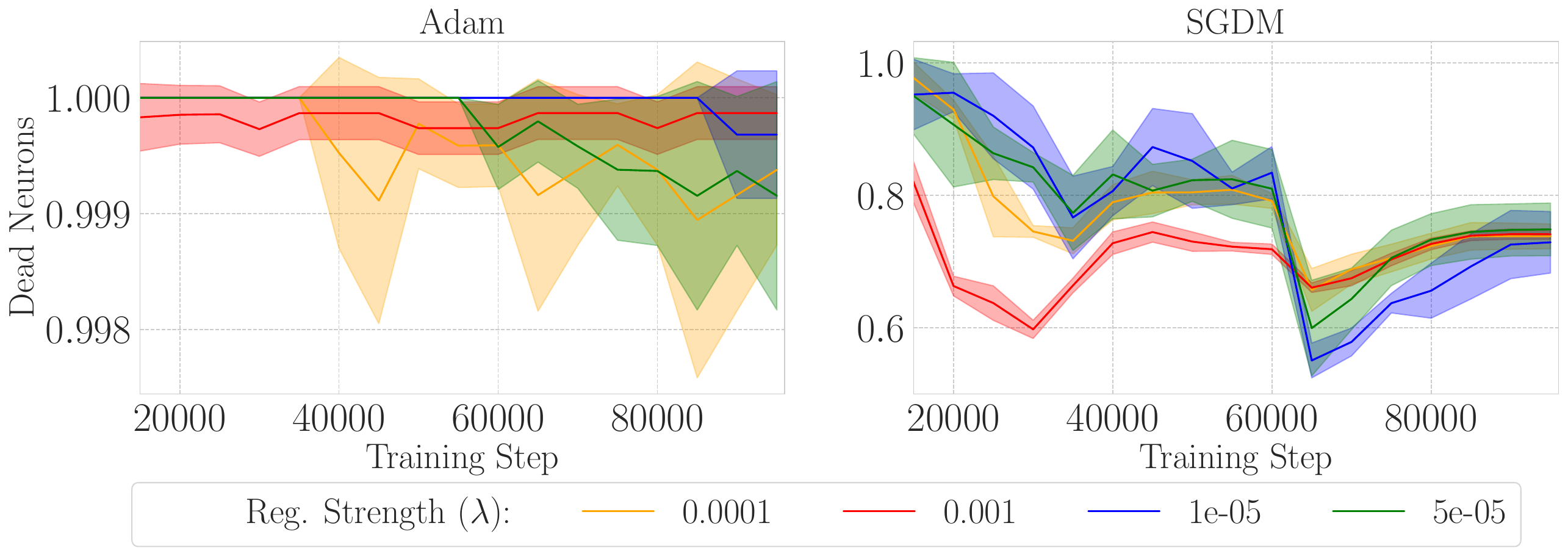}
    \caption{\modified{Overlap ratio of dead neurons during training, as measured across all layers of a ResNet-18 trained on CIFAR-10 at various maximal regularization strengths (Lasso($\gamma$)). Results are shown for training steps bigger than 15k when dead neurons become observable across all regularization strengths. The observation that most dying neurons remain dead justifies their early removal. \textbf{Left:} With Adam, we observe that almost all (over $99\%$) neurons dying never revive. \textbf{Right} The picture is more nuanced with SGDM, yet we find that the majority of dying neurons are still dead when training finishes ($\approx 75\%$). Neural revival mostly happens when the learning rate is decayed, followed right after by a phase where most of the revived neurons die again.}}
\label{fig:dead_neurons_overlap}
\end{figure*}

\begin{figure*}[b!]
\begin{center}
\includegraphics[width=\textwidth]{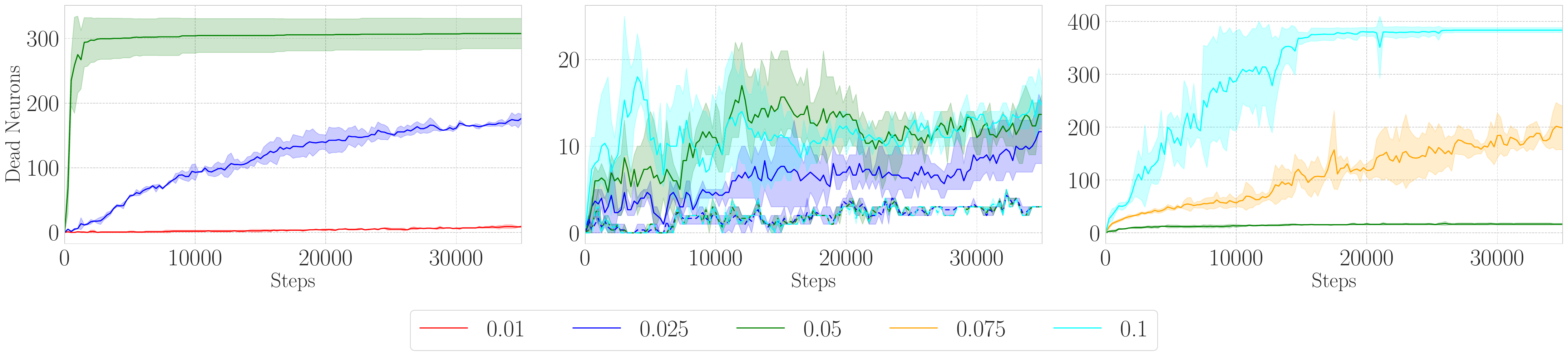}
\caption{Evolution of the number of dead neurons for a 3-layer MLP (with layers of widths 100, 300, and 10 respectively) on a subset of MNIST. \textbf{Left:} The noisy part of the minibatch gradient is isolated and used exclusively to update the parameters. Noisy updates are \textit{sufficient} to kill a subset of neurons following standard initialization. \modified{Because SGD gradient is 0 for dead neurons, there is an \textbf{asymmetry}: only live neurons are subject to noisy updates.} 
\textbf{Center:} Gaussian noise is added to the parameters update, either asymmetrically (applied only to live neurons, plain lines) or symmetrically (dashed lines). Asymmetric noise is much more prone to dead neuron accumulation while symmetric Gaussian noise can revive neurons, contrary to SGD noise, leading to a much smaller accumulation. \textbf{Right} Standard SGD. Dead neurons accumulate quickly in noisy settings, but they plateau when the NN converges (leading to zero gradient). Results are averaged over three seeds.}
\label{fig:noise_only_for_updates}
\end{center}
\end{figure*}

\section{Hyperparameters Impact, Additional Results}

\subsection{Noise, Learning Rate and Batch Size}\label{app:noise_lr_bs}
To investigate the role of noise, we trained a 3-layer deep MLP (with layers of widths 100, 300, and 10 respectively) on a subset of 10 000 images of the MNIST dataset.  
To isolate the noise from a minibatch (of size 1) gradient ($\hat{g}(\vw_j^t)$) we subtract from it the full gradient ($g(\vw_j^t)$), taken over the entire training dataset. 
As such, the update of neuron $j$ at every time step is given by $\vw_j^{t+1} = \vw_j^t - \eta \big(\hat{g}(\vw_j^t) - g(\vw_j^t) \big)$.

This approach has the benefit of preserving the asymmetric noise structure of SGD updates \citep{DBLP:journals/jns/Wojtowytsch23, Pillaud-Vivien_blog}, where dead units are not affected by noise but live ones are. Compared to applying solely symmetric Gaussian noise at every time step, we notice a much sharper accumulation of dead neurons. The details are shown in Fig.~\ref{fig:noise_only_for_updates}.

Diminishing the batch size or augmenting the learning rate also creates a noisier environment because both hyperparameters affect the noise covariance in SGD optimization \citep{DBLP:conf/iclr/KeskarMNST17, DBLP:journals/corr/MastersLuschi2018, DBLP:journals/corr/GoyalDGNWKTJH17, DBLP:conf/nips/HeLT19, DBLP:conf/nips/LiWM19a}. Furthermore, \citet{DBLP:conf/iclr/SmithKYL18} shows that learning rate decay can be replaced with batch size growth, emphasizing the relationship between the two quantities. Because of their impact on noise, we should expect those quantities to affect the dying ratio of neurons. We confirm empirically this hypothesis in Fig.~\ref{fig:histogram_main}.

\subsection{\modified{Training Time}}\label{app:training_time_impact}
\modified{The relation with training time, asserting that the probability of a neuron dying increases as training progresses (Prop. \ref{prop:hitting_time}) doesn't entirely align with practical applications. Modern overparameterized architectures often can memorize the entire training dataset, achieving zero loss in the process. Given that the gradient signal is proportional to the loss, it would concurrently diminish to zero for all neurons, preventing any further death.}

\modified{We observe a pattern consistent with this idea (Fig. \ref{fig:typical_accumulation}), where the total count of dead neurons spikes sharply in early training to then fluctuate slightly before stabilizing. The fluctuations demonstrate that neurons \textit{can indeed revive}. However, additional experiments with ReLU networks revealed that most reviving neurons die again later (Fig. \ref{fig:dead_neurons_overlap}) and that their dynamic elimination has negligible to no impact on performance (Fig. \ref{fig:dyn_pruning_difference}).}

\begin{figure}[tb]
    \centering
    \subfigure{\includegraphics[width=0.40\textwidth]{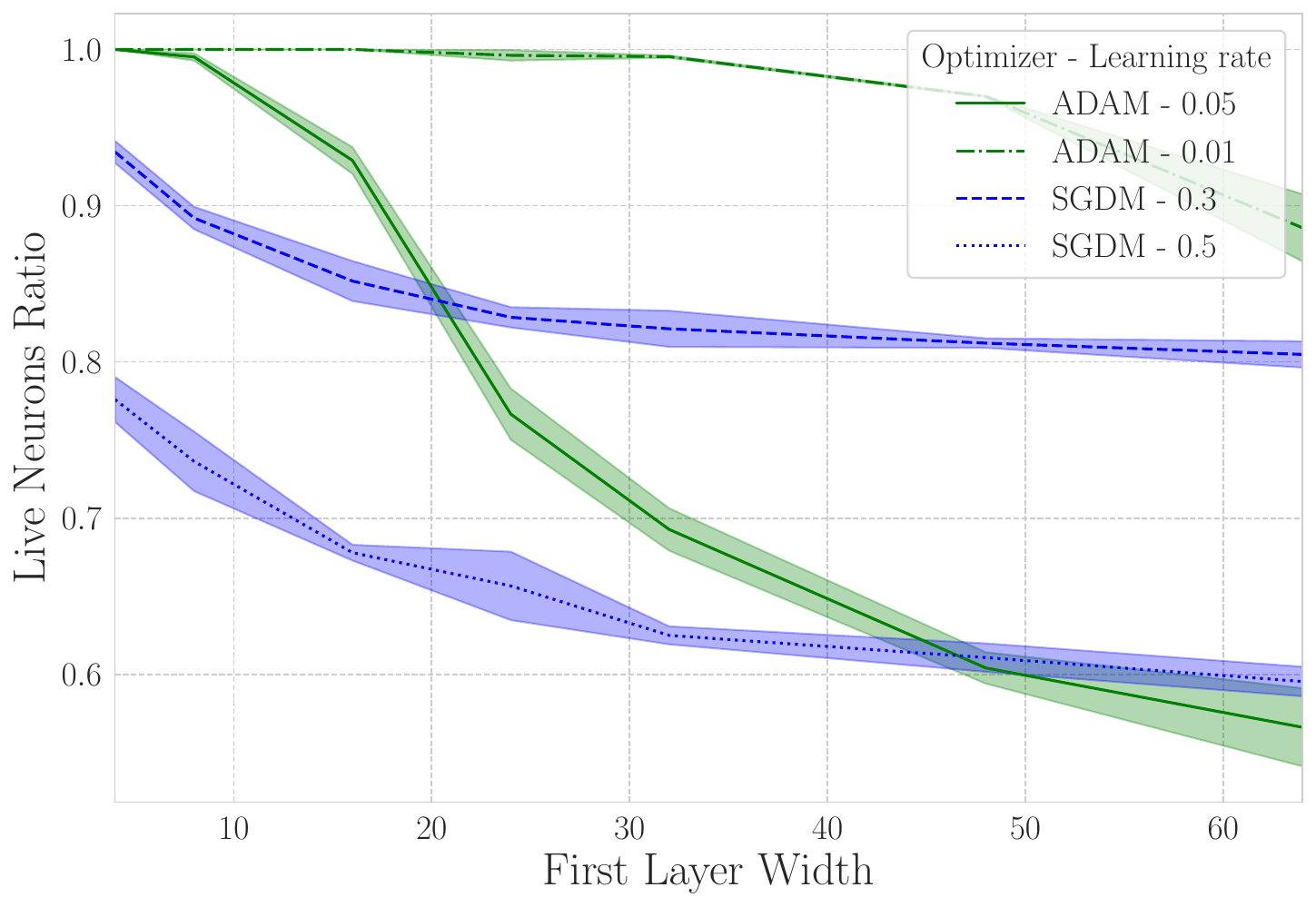}}
    \hfil \hfil
    \subfigure{\includegraphics[width=0.5\textwidth]{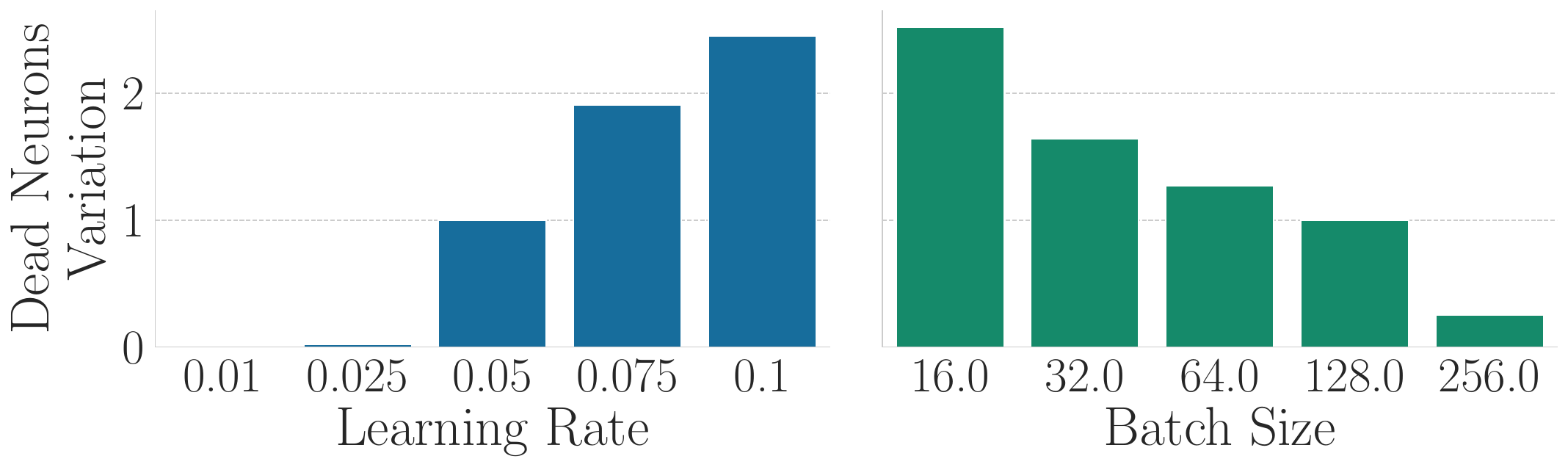}}
    \caption{\textbf{Left}: An increased width leads to a higher ratio of neurons dying, independently of the optimizer. Measured on a ResNet-18 trained on CIFAR-10, without added regularization, across three seeds. We use the number of channels in the initial layer of the ResNet-18 to indicate the width, with 64 being the number of channels proposed by \citet{DBLP:conf/cvpr/HeZRS16} for the first convolution layer. \textbf{Right:} Varying the hyperparameters of a ResNet-18 (CIFAR-10) impacts the number of dead neurons. 
    The bar heights indicate the multiplicative ratio of dead neurons compared to the base configuration ($\mathrm{lr}=0.05$, $\mathrm{bs}=128$ and $\lambda=0$).
    The number of training steps was kept \textbf{constant} when varying the batch size for fair comparisons. Quantities are averaged over three random seeds.}
    \label{fig:width_impact}
    \label{fig:histogram_main}
\end{figure}

\subsection{Network Width}\label{app:network_width_impact}
The widths of a neural network's layers also influence the ratio of live neurons (live neurons to total neurons in the network) post-training (see Fig.~\ref{fig:width_impact}). Typically, this ratio increases with the width; however, the total number of live neurons continues to rise with increased width. This phenomenon is somewhat anticipated as incorporating more neurons with random initialization in any given layer can only amplify the training noise, especially in the initial phase. Moreover, since initialization functions usually adjust their standard deviation proportionally to the number of channels ($\sigma \propto \sqrt{\frac{1}{\text{fan\_in} + \text{fan\_out}}}$), widening the network places neurons closer to their inactive region right from the initialization. The connection between width and dead neurons maintains its significance as neural network sizes are inclined to increase over time with the availability of more computational resources. If this trend persists, the accumulation of dead neurons could potentially become increasingly pervasive.

\section{Adam is a Neuron Killer}\label{app:adam_neuron_killer}

The greater impact of Adam over the dying ratio compared to momentum must be due to the second-moment term, which is the only significant difference with momentum. Recall that Adam update \citep{DBLP:journals/corr/KingmaB14} is given by:
\begin{align*}
m_t &= \beta_1 \cdot m_{t-1} + (1 - \beta_1) \cdot g_t \\
v_t &= \beta_2 \cdot v_{t-1} + (1 - \beta_2) \cdot g_t^2 \\
\hat{m}_t &= \frac{m_t}{1 - \beta_1^t} \\
\hat{v}_t &= \frac{v_t}{1 - \beta_2^t} \\
\theta_{t+1} &= \theta_t - \frac{\eta}{\sqrt{\hat{v}_t} + \epsilon} \cdot \hat{m}_t\qquad 
\end{align*}
Earlier, we hypothesized that the neurons ending up dead were the ones experiencing very small gradients, such that the noise dominated their update trajectories. If this is the case, $g_t^2$ (the squared gradient) would be very small for those neurons' parameters, eventually leading to a very small second-moment estimation $\hat{v}_t$. In such a scenario, $\epsilon$ would end up dominating $\sqrt{\hat{v}_t}$, effectively multiplying the learning rate by $\epsilon^{-1}$ ($\epsilon$ is typically set to $1 \times 10^{-8}$). Moreover, as the decay ($\beta_2=0.99$) of $\hat{v}_t$ is usually slower than the one of $\hat{m}_t$ ($\beta_1=0.9$), a few sudden noisier updates would be sufficient to make huge random steps. \\ 

It is worth noting that RL practitioners typically set epsilon to a higher value \citep{DBLP:conf/aaai/HesselMHSODHPAS18}, as it has empirically been found to perform better. Because of constant distribution shifts, rapid accumulation of dead neurons often occurs in RL tasks \citep{lyle2022understanding, sokar2023dormant}. Higher $\epsilon$ values should reduce the number of dead neurons induced by Adam optimizer, by making the effective learning rate ($\frac{\eta}{\epsilon}$) smaller when $\epsilon$ dominates $\sqrt{\hat{v}_t}$. We confirm this hypothesis in Fig.~\ref{fig:adam_eps_impact_on_sparsity}, confirming the soundness of picking higher $\epsilon$ to improve performance and stability in RL.

\begin{figure*}[tb]
    \begin{center}
    \includegraphics[width=0.55\textwidth]{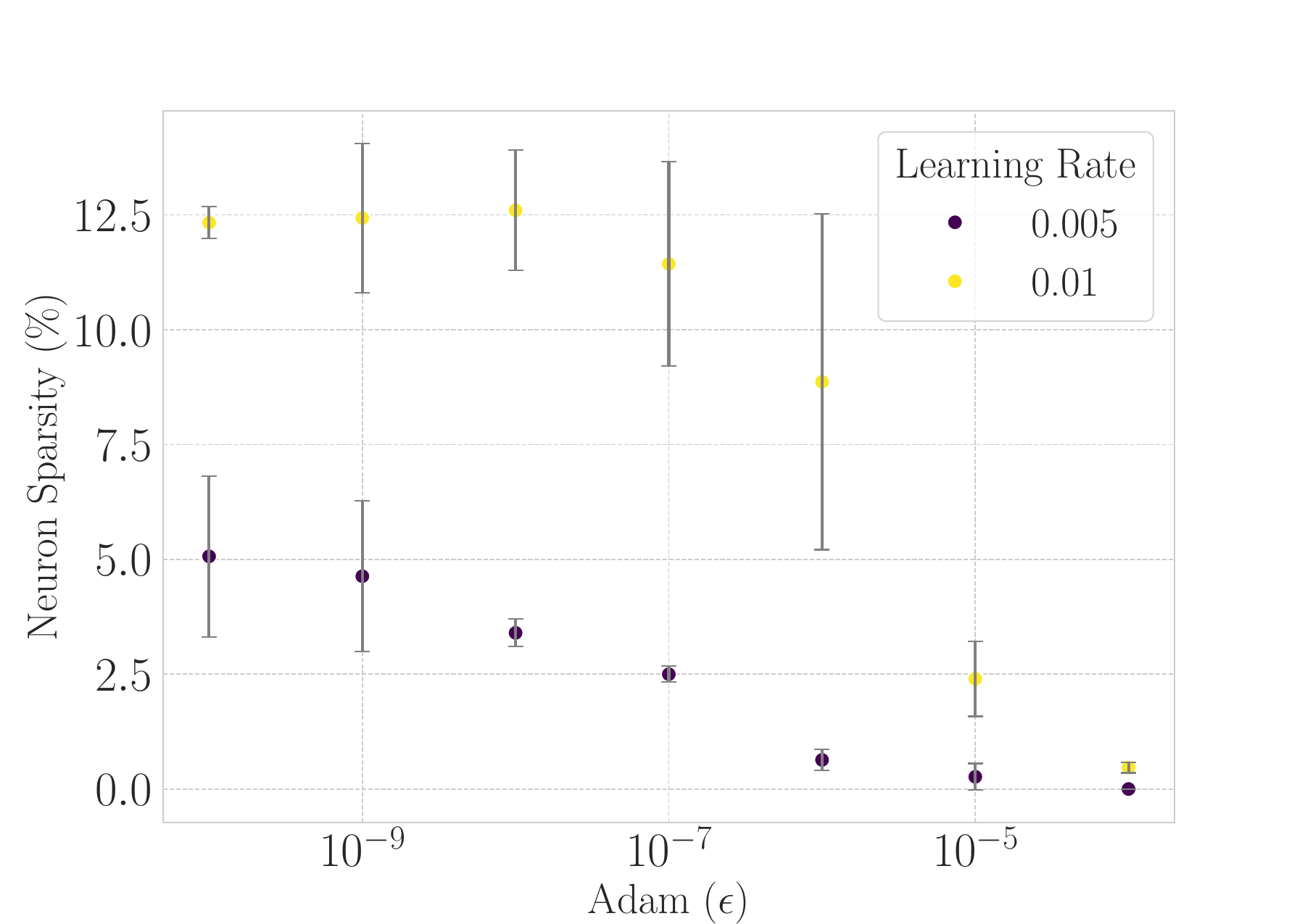}
    \end{center}
 \caption{Sparsity decreases from 5.07±2.04\% to 0\% when varying Adam $\epsilon$ parameter from 1e-10 to 1e-4, or from 12.6\%±1.15\% to 0.47\%±0.01 when the learning rate is increased, supporting our claim that the usage Adam optimizer impacts the accumulation of dead neurons through $\epsilon$ parameter. We note that this additional evidence complements the findings from \citet{lyle2022understanding}, which reported reduced plasticity loss with increasing $\epsilon$. Experiments were performed with ResNet-18 on CIFAR-10 over three seeds, without DemP, using the same training recipe as in Appendix~\ref{app:training_details__res18}.}
\label{fig:adam_eps_impact_on_sparsity}
\end{figure*}

Also notable, HuggingFace Transformers library \citep{wolf-etal-2020-transformers} default $\epsilon$ Adam parameter to $1 \times 10^{-6}$, following RoBERTa example \citep{DBLP:journals/corr/Roberta2019}. Manipulating the $\epsilon$ parameter of AdaGrad was also observed to impact significantly a transformer performance model \citep{DBLP:journals/corr/AgarwalAHKZ2020}. Confirming that those heuristic choices are due to their impact on dead neuron accumulation would be quite interesting.

\section{Pruning Method Ablation}\label{app:pruning_ablation_details}
We validate and justify the heuristic choices made for our pruning method, summarized in Algorithm \ref{alg:demon_pruning}, via empirical observation exposed in this section. We used the same setup as before for a ResNet-18 trained on CIFAR-10.

\subsection{Regularizer choice}\label{app:reg_choice}
In our empirical analysis, we evaluated the effectiveness of various regularizers on the performance of ResNet-18 networks trained on CIFAR-10, using either Adam or SGDM as optimizers. Our findings (Fig.~\ref{fig:regularizer_choice}) suggest that focusing regularization exclusively on scale parameters yields a more favourable balance between sparsity and performance with both optimizers. While L2 regularization on scale parameters slightly enhances performance, the scenario changes with SGDM, where Lasso regularization on these parameters outperforms others by a wider margin. Consequently, for the sake of simplicity in our experiments, we have chosen to consistently apply Lasso regularization to scale parameters.

\subsection{Dynamic Pruning}\label{app:dyn_pruning}
To verify the impact of dynamic pruning, we measured if there were any performance discrepancies when it was enabled or not. Across runs, we varied the regularization strength while measuring accuracy and sparsity. The results, in Fig.~\ref{fig:dyn_pruning_difference}, show that enabling dynamic pruning does not affect the final performance. The very slight variations between runs fall well between the expected variance across different runs. This experiment reinforces the hypothesis that neurons that die and later revive during training do not contribute significantly to the learning process.

\subsection{Death Criterion Relaxation}\label{app:dead_criterion_relaxation}
The definition we choose for a dead neuron asks for it to be inactive to the entire dataset. In practice, we found that this criterion could be relaxed and defaulted to using 512 (2048 for ImageNet) examples from the training dataset to measure the death state (measuring across multiple minibatches when necessary). Fig.~\ref{fig:dead_criterion_comparison} shows that using this proxy for tracking dead units is sufficient. \arxiv{With this relaxation of the criterion, pruning interventions become highly efficient, incurring at most the computational cost of a few forward propagation batches.}
To measure if a minibatch could be used to measure the death state instead of the entire dataset, we tracked the number of dead neurons during training with both metrics in Fig.~\ref{fig:dead_criterion_comparison}. 
We can see that both curves closely track each other. More importantly, they match at the end of the training, indicating that overall the same amount of neurons would be removed when performing the death check over the minibatch. Dynamic pruning was disabled for this experiment.

\begin{figure*}[tb]
    \centering
    \includegraphics[width=\textwidth]{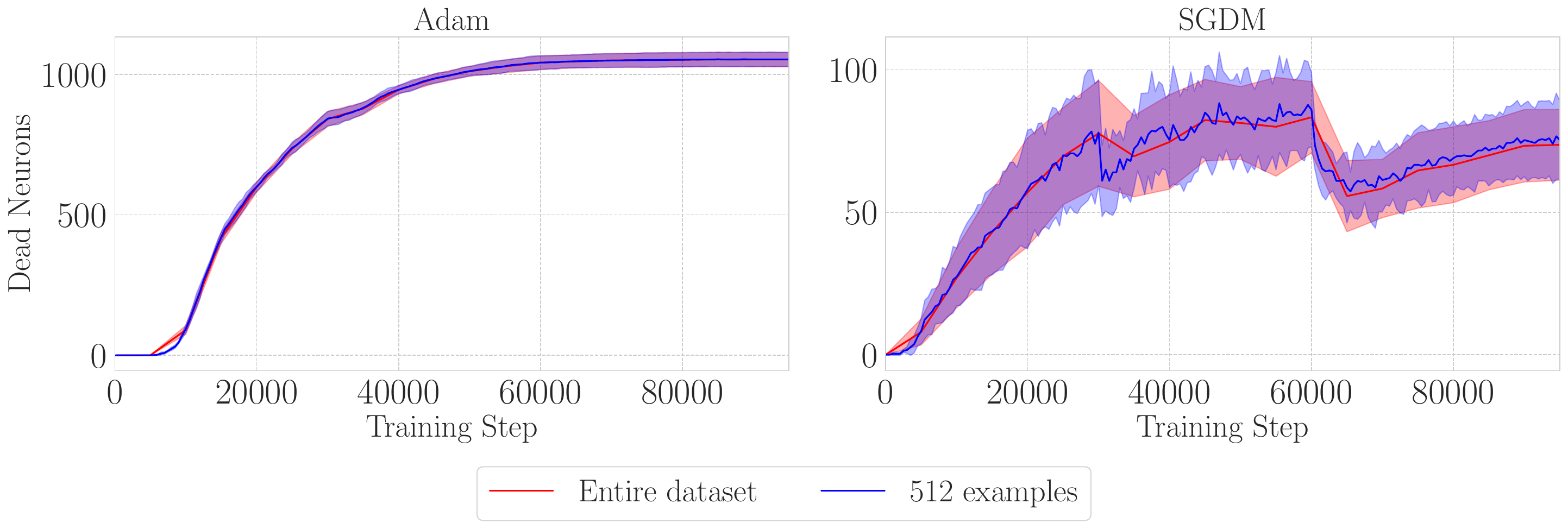}
    \vskip -0.1in
    \caption{Instead of validating the death state of neurons against the entire training dataset, it proves sufficient to use a smaller dataset. The curves match throughout training, indicating that roughly the same number of neurons are removed with both strategies. Experiments were performed with ResNet-18 and Lasso($\gamma$) regularization on CIFAR-10 over three seeds. \textbf{Left:} With Adam. \textbf{Right:} With SGDM.}
\label{fig:dead_criterion_comparison}
\end{figure*}

\subsection{Regularization Schedule}\label{app:reg_param_schedule}
We also empirically tested different schedules over the regularization parameter in Fig.~\ref{fig:reg_param_schedule_ablation}, trying to mitigate the impact of high regularization by decaying the parameter throughout the training after a warmup phase. We settled on using a one-cycle scheduler for the regularization strength because of slightly better performance in the higher sparsity level. However, we remark that all tested schedules over the regularization parameter remain sound with our method.

\subsection{Weight Decay}\label{app:weight_decay} 
Our method defaults back to traditional regularization, with a term added directly to the loss, as opposed to the weight decay scheme proposed by \citet{DBLP:conf/iclr/LoshchilovH19}. By doing so, the adaptive term in optimizers takes into account regularization, and neurons move more quickly toward the inactive region. From a pruning perspective, it achieves a higher sparsity than weight decay for the same regularization strength. Note that we are referring here to the added one-cycle regularization on scale parameters, which on top of potential constant weight decay applied as part of the original training recipe. 

\subsection{Added Noise}\label{app:added_noise}
\arxiv{We measured empirically the impact of adding artificial asymmetric noise. As expected, adding too much noise hurts performance. However, when noise variance is small enough, it doesn't affect performance while helping neurons to cross over in the inactive region, as detailed in Fig.~\ref{fig:noise_impact}. The effect is particularly significant when training a ResNet-50.}

\arxivv{Furthermore, we assess that increasing noise instead of regularization is also a valid strategy for pruning, as showcased in Fig.~\ref{fig:pruning_with_noise_res18}. However, it leads to a worse tradeoff than when using increased regularization. Significantly, in this setting, with dynamic pruning enabled, we notice no major difference in performance between using asymmetric or symmetric noise.}

\begin{figure*}[tb]
    \centering
    \includegraphics[width=\textwidth]{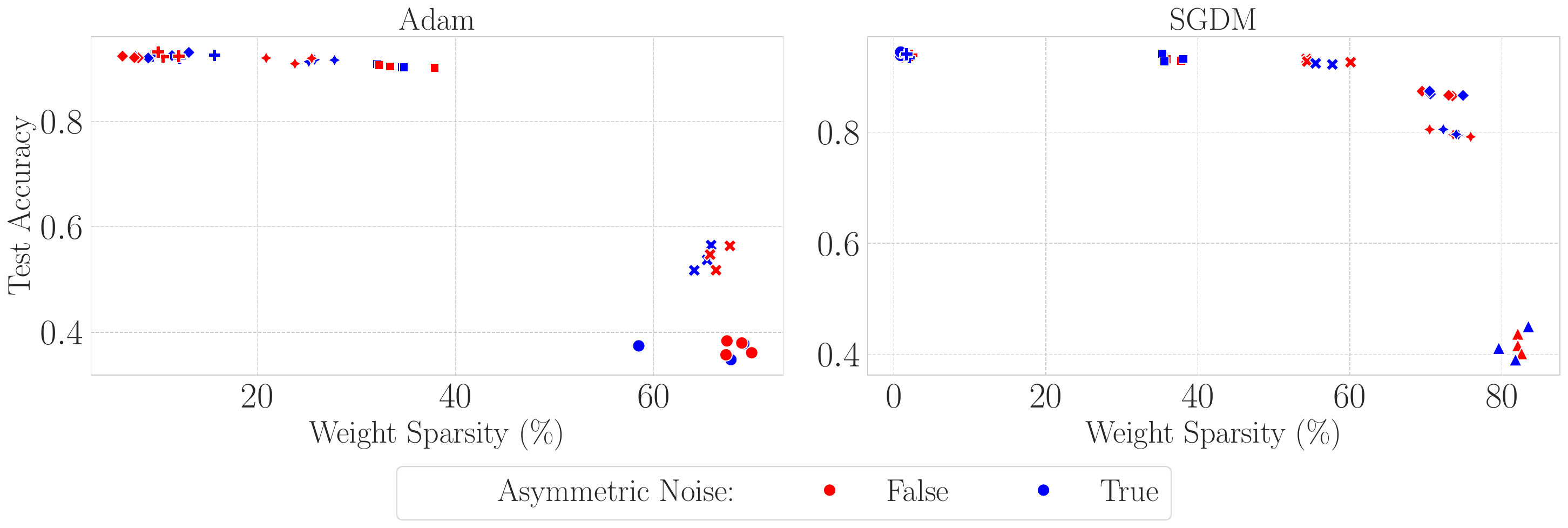}
    \vskip -0.1in
    \caption{Weight sparsity for a ResNet-18 trained on CIFAR-10 while varying the variance of the added noise, \textit{without adding any regularization}. The experiments were made with dynamic pruning enabled and showcased that using asymmetric noise over the live neurons only instead of symmetric noise across neurons does not impact the tradeoff. \textbf{Left:} With Adam, \textbf{Right:} With SGDM.}
    \label{fig:pruning_with_noise_res18}
\end{figure*}

\subsection{Activation function}\label{app:ablation_activation_fn}
Dead neurons
are naturally defined with ReLU activation functions, for which neurons can completely deactivate.
However, most activation functions, such as Leaky ReLU \citep{Maas13rectifiernonlinearities}, also exhibit a ``soft" saturated region. We postulate that neurons firing solely from the saturated region do not contribute much to the predictions and can be considered \textit{almost dead}. We test this hypothesis by employing our method in a network with Leaky ReLU activations (Fig. \ref{fig:res18_leaky}), removing neurons with only negative activation across a large minibatch. Again, our method can outperform baselines with Adam and offers a similar performance with SGDM.

\section{Additional Results}\label{app:additional_results}

We report additional results in this section, including the pruning performance of DemP when pruning a VGG-16 (Fig.~\ref{fig:vgg16_cifar10}) trained with both Adam and SGDM and when pruning a ResNet-50 trained with SGDM (Table \ref{tab:res50_sgdm_results}). Results are similar, with DemP outperforming baselines with Adam at high sparsity. However, the performance of DemP with SGDM decay abruptly passed 95\% weight sparsity, which may be due to improper tuning of the method when using a one-cycle learning scheduler to train for a small number of epochs (see Appendix~\ref{app:training_details__vgg}). The results with Leaky ReLU are in Fig.~\ref{fig:res18_leaky}.

\begin{figure*}[t!]
    \centering
    \includegraphics[width=\textwidth]{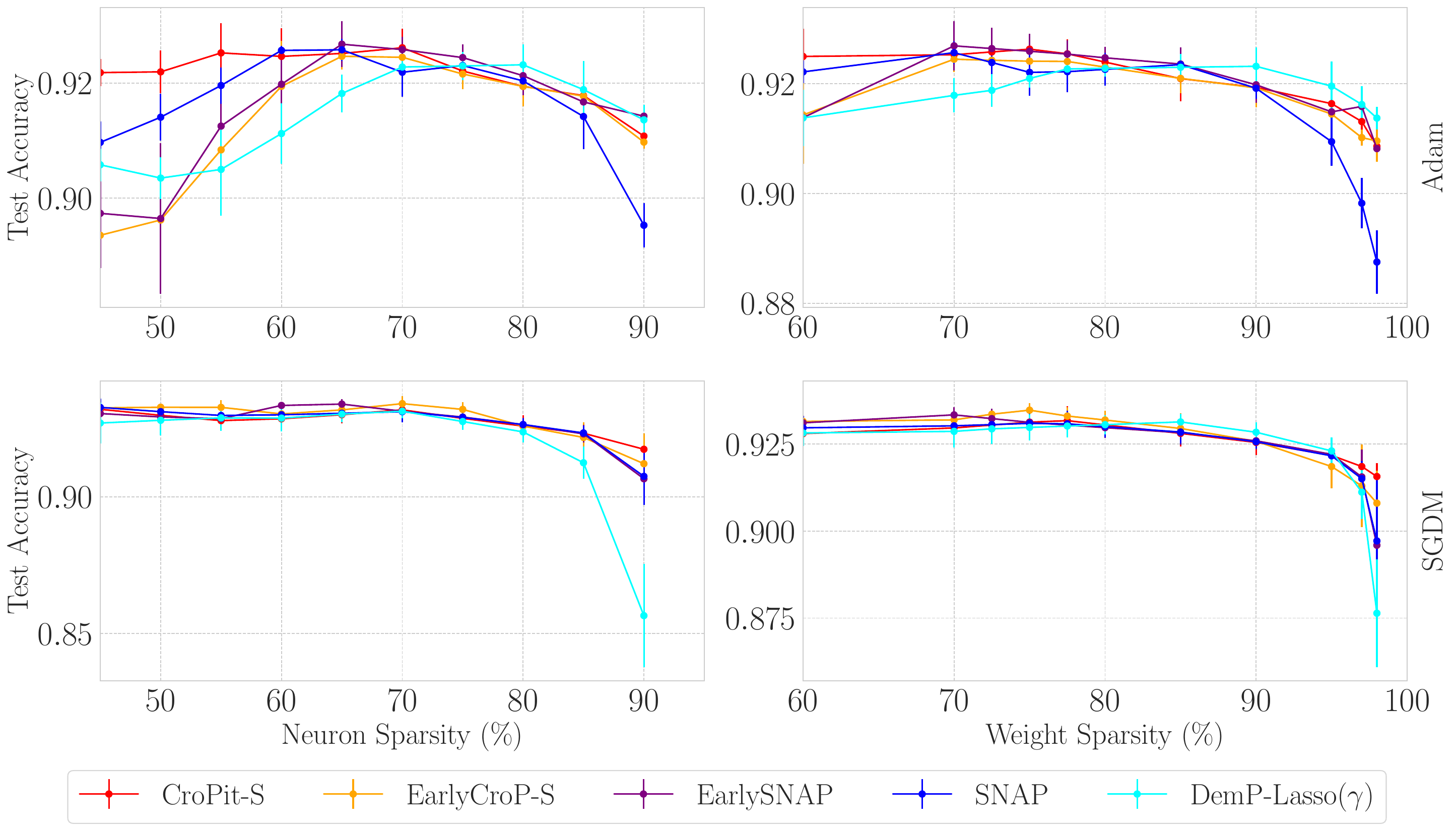}
    \vskip -0.1in
    \caption{The results on VGG-16 networks trained with Adam \arxiv{(\textbf{Top}) and SGDM (\textbf{Bottom)}} on CIFAR-10. \arxiv{With Adam at high sparsities, DemP can find subnetworks that better maintain performance. With SGDM, the performance instead decays quickly compared to baseline when reaching $\approx$ 95\% weight sparsity ($\approx$ 80\% neural sparsity). \arxivv{Higher sparsities with DemP are obtained by increasing the peak strength of the added scheduled regularization.} \textbf{Left:} Neural sparsity, structured methods.} \textbf{Right:} Weight sparsity, structured methods.}
    \label{fig:vgg16_cifar10}
\end{figure*}

\begin{figure*}[h!]
    \centering
    \includegraphics[width=\textwidth]{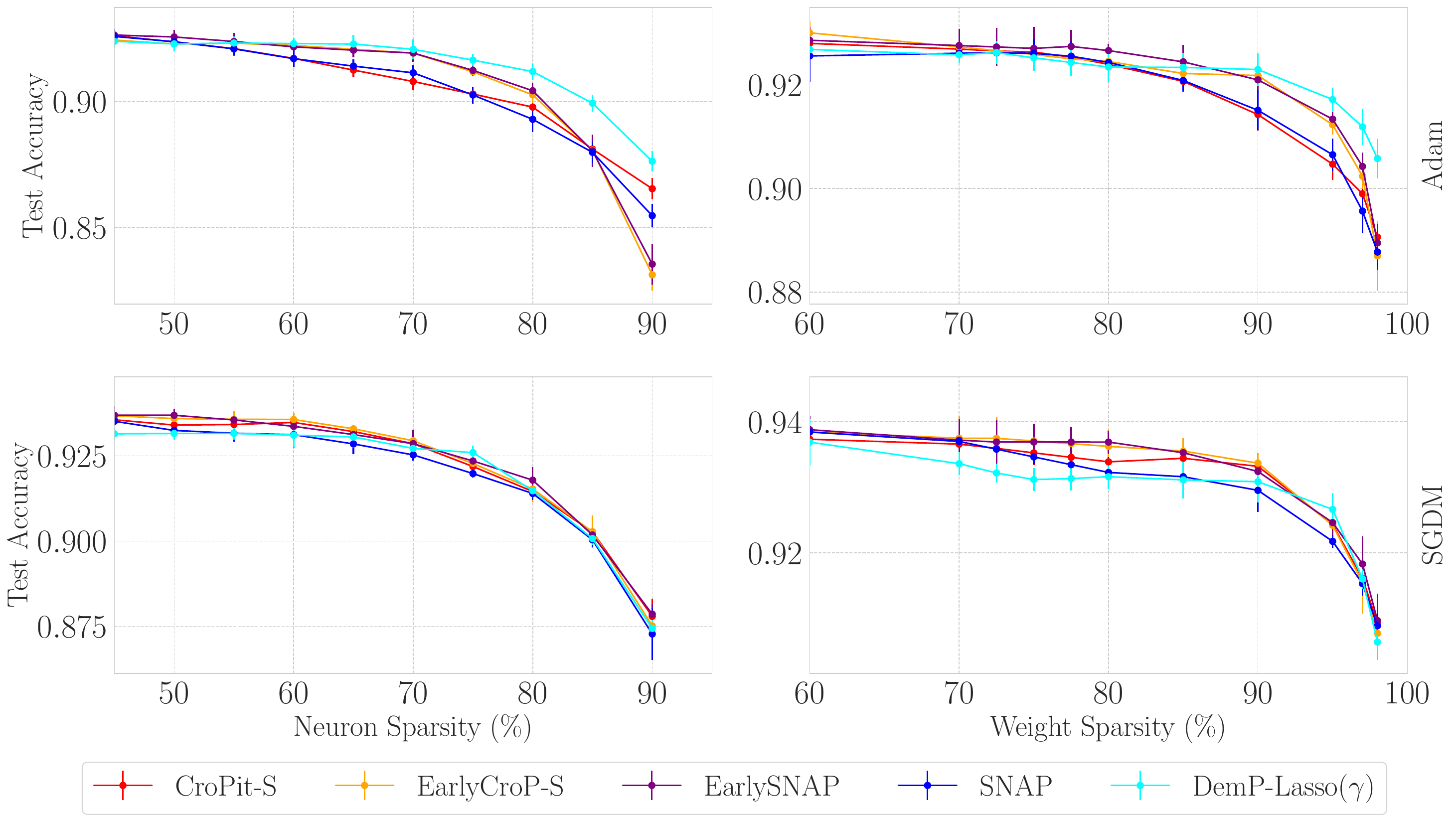}
    \vskip -0.1in
    \caption{\textbf{Top:} ResNet-18 networks with {\it Leaky ReLU} trained on CIFAR 10 with Adam. DemP again outperforms the baseline structured pruning methods when using Adam. \arxivv{Higher sparsities with DemP are obtained by increasing the peak strength of the added scheduled regularization.} \textbf{Bottom:} With SGDM, DemP performance is again similar to other methods. \textbf{Left:} Neural sparsity, structured methods. \textbf{Right:} Weight sparsity, structured methods. }
    \label{fig:res18_leaky}
\end{figure*}

\modified{We included results from \citet{jaxpruner} and \citet{evci2020rigging} in Table \ref{tab:res50_sgdm_results} to better illustrate the tradeoff between structured and unstructured pruning methods. While unstructured methods currently offer more potential to maintain performance at higher parameter sparsity, structured methods offer direct speedup advantages.}

\subsection{Comparison with Unstructured Methods}\label{app:unstructured_methods}

We employ the \texttt{JaxPruner} package \citep{jaxpruner} to illustrate further tradeoffs of our method against some unstructured methods. Our method is capable of achieving \textit{similar} performance to unstructured ones for the ResNet-18 experiments (Fig.~\ref{fig:unstructured_res18}). The comparisons with the unstructured methods use their default configuration from JaxPruner, which was tuned for a ResNet-50. We expect their performance on smaller models to be improved by \modified{tuning the pruning distribution, the pruning schedule, and the pruning iterations scheme \citep{jaxpruner}. However, for those not interested in expensive tuning, our method becomes an interesting default choice.}

\begin{figure*}[tb]
    \centering
    \includegraphics[width=\textwidth]{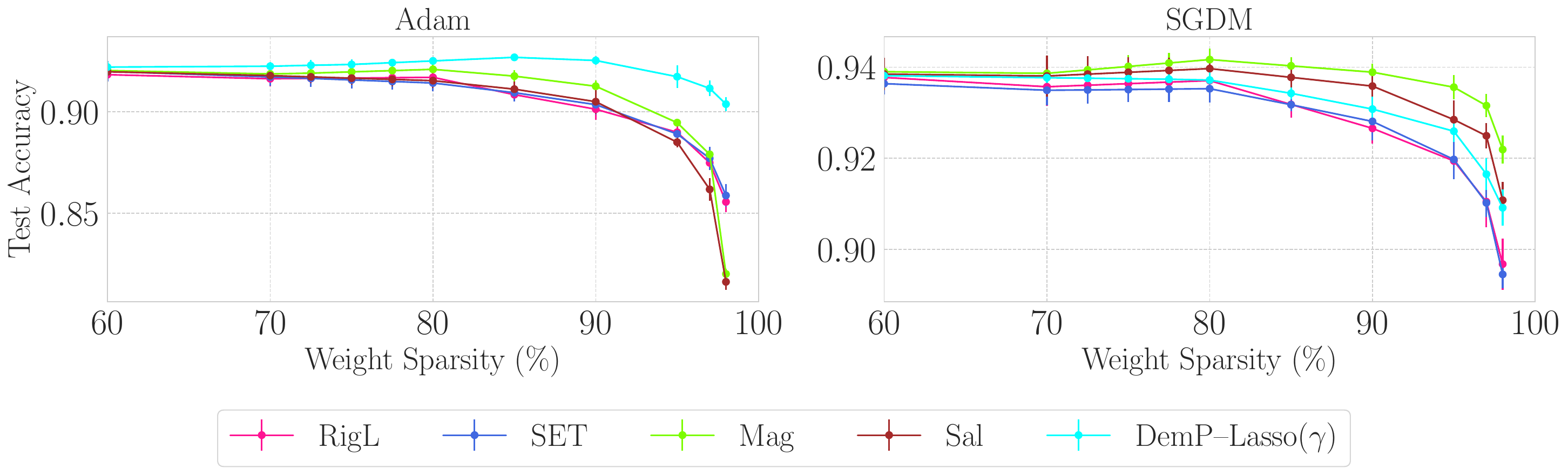}
    \vskip -0.1in
    \caption{Comparison between DemP and unstructured pruning methods from \citet{jaxpruner}, performed on ResNet-18 trained on CIFAR-10 over three seeds. \arxivv{Higher sparsities with DemP are obtained by increasing the peak strength of the added scheduled regularization.} \textbf{Left:} With Adam, DemP proves more effective than the unstructured methods left at their base configuration, even if it cannot achieve the same granularity in sparsity by being a structured method. \textbf{Right:} With SGDM, DemP performs similarly.}
    \label{fig:unstructured_res18}
\end{figure*}

\subsection{Transformers Network}\label{app:transformers}
\nips{We explored the compatibility of DemP with transformer architectures, specifically the Vision Transformer (ViT-B/16) \citep{VisionTransformersDosovitskiyB0WZ21} equipped with GeLU activation functions. Our focus was on pruning the MLP layers within the transformer blocks because these layers account for approximately 65\% of the total parameters and contain activation functions, making them directly applicable to DemP.}

\nips{This investigation led to two significant observations that, to the best of our knowledge, have not been previously reported: (a) even when training the "dense" version, 90\% of the neurons within the MLP layers die during training, and (b) in the last MLP block, neurons tend to die quickly before gradually reviving. In other layers, neurons that die remain dead, consistent with our previous observations in other architectures. Those observations are reported in Fig. ~\ref{fig:layer_sparsities_vit}.}

\nips{To accommodate this behavior, we adapted DemP so that pruning begins after neuron revival occurs in the last MLP layer. The exploratory results, reported in Table \ref{tab:vit_early_results}, are based on $\lambda=0$ (no regularization), but with minimal additional noise applied. Inspired by \citet{relustrikebackMirzadehAMMTSRF}, we also included results where the GeLU activation functions were replaced with ReLU. Although these results are preliminary, they illustrate the potential of DemP for transformer-based architectures. A promising future direction would be to use DemP in conjunction with the approach proposed by \citet{relustrikebackMirzadehAMMTSRF}, where additional ReLU activations are incorporated into transformer architectures.}

\begin{figure*}[tb]
    \centering
    \includegraphics[width=\textwidth]{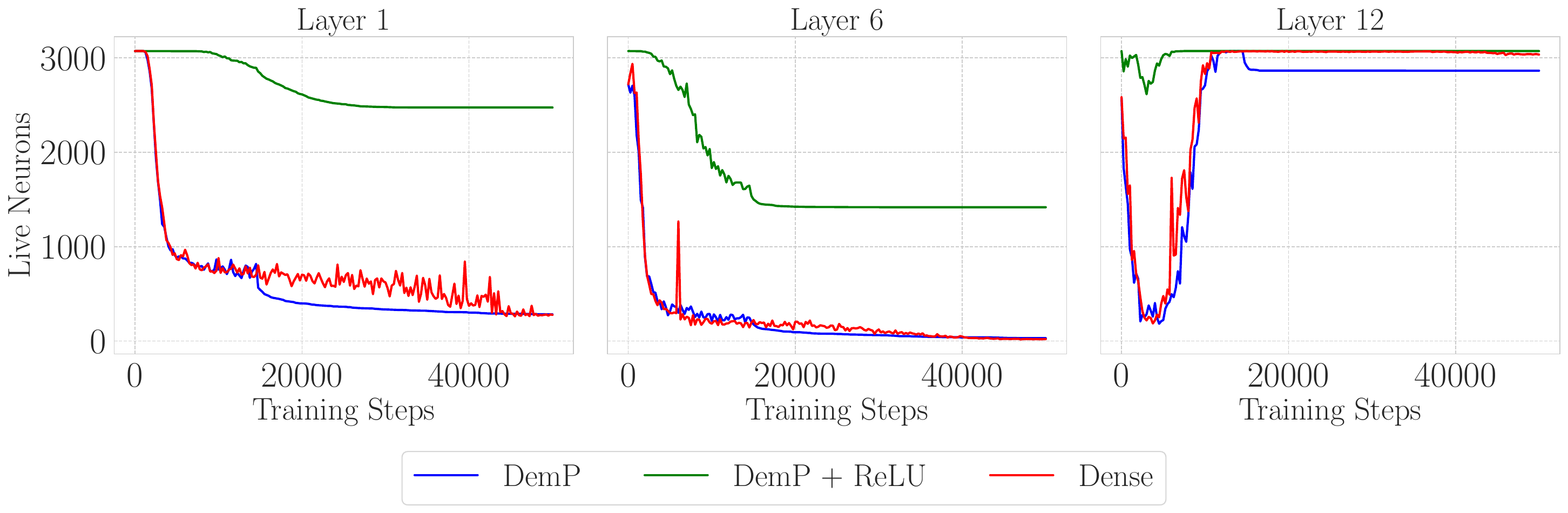}
    \vskip -0.1in
    \caption{This figure illustrates the evolution of dead neurons during the training process for a 12-layer deep Vision Transformer (ViT-B/16). The rightmost plots depict the behavior in the last layer, where most neurons initially die but subsequently revive, exhibiting a qualitatively different pattern from the other layers. The leftmost and center plots show the typical behavior observed in the other layers, which is more consistent with what has been seen in other architectures.}
    \label{fig:layer_sparsities_vit}
\end{figure*}

\section{\nips{Limitations of our approach}}\label{app:extended_discussion}

\nips{The primary limitation of our work is that, unlike most methods, the target sparsity level cannot be specified within the algorithm. Instead, DemP currently relies on a hyperparameter search over the regularization strength ($\lambda$) to achieve the desired sparsity. However, we are confident that our method can be adapted to this paradigm by continuously increasing the regularization until the desired sparsity is achieved before terminating the process. This strategy is similar to the approach described by \citet{Wang2021GrowReg}. Additionally, while our analysis demonstrated the impact of batch size and learning rate on final sparsity, for practical reasons, we decided to rely entirely on regularization strength to control the sparsity level. This decision may constrain DemP to suboptimality.}

\nips{In its current form, DemP removes entire channels from convolutional networks. Adopting a more fine-grained approach, such as removing individual filters, could potentially lead to better performance. Combining DemP with unstructured approaches could also result in a more competitive hybrid method.}

Finally, DemP relies on activation functions with saturation regions to identify dead neurons. Consequently, DemP cannot be directly applied to prune layers that are not sandwiched between activation functions, such as the attention layers in transformer blocks. \tmlr{This is reflected in our experiments on ViT-B/16 networks (Section \ref{subsection:transformers_main_body}) where we limited DemP to pruning the fully-connected block within the network. Sparse training methods for Transformers remain underexplored and present unique challenges. A detailed study of DemP in this context, while valuable, is beyond the scope of this paper.}

\section{Implementation details}\label{app:training_details}

\subsection{ResNet-18/ResNet-50}\label{app:training_details__res18}
We mostly followed the training procedure of \citet{evci2020rigging} for the ResNet architectures. 

\textbf{ResNet-18.} We train all networks for 250 epochs using a batch size of 128. The learning rate is initially set to 0.005 for Adam, to 0.1 for SGDM, and is thereafter divided by 5 every 77 epochs. \modified{While varying regularization is used with our method, it is on top of a constant weight decay (0.0005) used across all methods, including ours}. Random crop and random horizontal flips are used for data augmentation. The training utilized a Nvidia RTX8000 GPU.

\textbf{ResNet-50.} We trained the ResNet-50 for 100 epochs, with a batch size of 256 instead of 4096. The initial learning rate is set to 0.005, before being decayed by a factor of 10 at epochs 30, 70, and 90. Label smoothing (0.1) and data augmentation (random resize to either 256 $\times$ 256 or 480 $\times$ 480, before randomly cropping to 224 $\times$ 224. Followed by random horizontal flip and input normalization) are also used. We again use Adam and SGDM, \modified{using constant weight decay (0.0001) for both.} Training utilized a Nvidia A100 GPU.

\subsection{VGG-16}\label{app:training_details__vgg}
We followed a training procedure similar to \citet{DBLP:conf/icml/RachwanZCGAG22}. We used Adam and SGDM with respectively a learning rate of 0.005 or 0.1 and a batch size of 256, with the One Cycle Learning Rate scheduler \citep{smith2018superconvergence}. The networks are trained for 80 epochs. CIFAR-10 images are normalized and resized to 64 $\times$ 64 before applying random crop and random horizontal flip for data augmentation. The training utilized a Nvidia RTX8000 GPU.

\subsection{Vision Transformer (ViT-B/16)}\label{app:training_details__vit}
We followed the training routine of \citet{Lasby2023srigl}, that is we added data augmentations following the standard TorchVision \citep{torchvision2016} ViT-B/16 training procedure for ImageNet. These augmentations include random cropping, resizing to 224x224 pixels, random horizontal flips, RandAugment \citep{Cubuk2020}, and normalization with typical RGB channel mean and standard deviation values. Additionally, we applied either random mixup \citep{zhang2018mixup} or random cutmix \citep{CutMixYun2019}, with alpha parameters of 0.2 and 1.0 respectively.

We sampled 16 mini-batch steps with 256 samples per mini-batch and accumulated gradients before applying the optimizer, resulting in an effective mini-batch size of 4096. The model was trained for 150 epochs using an AdamW \citep{DBLP:conf/iclr/LoshchilovH19} optimizer with weight decay, label smoothing, and $\beta_1$, $\beta_2$ coefficients of 0.3, 0.11, 0.9, and 0.999, respectively. We used cosine annealing with linear warmup for the learning rate scheduler, starting at 9.9e-5 and warming up to 0.003 by epoch 16. Gradients were clipped to a max L2 norm of 1.0. Uniform sparsity was applied across all layers, with $\Delta T$ set to 100 to update connectivity every 100 mini-batch steps. The training utilized a Nvidia A100 GPU.

\subsection{Structured Methods}
We closely reimplement in JAX \citep{jax2018github} the structured methods from \citet{DBLP:conf/icml/RachwanZCGAG22}, keeping all \modified{the hyperparameters specific to every method as is. The training hyperparameters are the same as specified in \ref{app:training_details__res18} and \ref{app:training_details__vgg}.}

\subsection{Unstructured Methods}
For the unstructured methods, we rely on \citet{jaxpruner} implementations, \modified{using their method's configuration for pruning a ResNet-50 for all our experiments. The training hyperparameters are the same as specified in \ref{app:training_details__res18} and \ref{app:training_details__vgg}.}

\end{document}